\relax




\documentclass{article}
\pdfpagewidth=8.5in
\pdfpageheight=11in

\usepackage{kr}

\usepackage{times}
\usepackage{soul}
\usepackage{url}
\usepackage[hidelinks]{hyperref}
\usepackage[utf8]{inputenc}
\usepackage[small]{caption}
\usepackage{graphicx}
\usepackage{amsmath}
\usepackage{amsthm}
\usepackage{booktabs}
\usepackage{algorithmic}
\urlstyle{same}



\pdfinfo{
/TemplateVersion (KR.2021.0)
} 

\setcounter{secnumdepth}{0} 

\usepackage{xspace}
\usepackage[linesnumbered,ruled,vlined]{algorithm2e}
\usepackage{paralist}
\usepackage{multirow}

\newtheorem{theorem}{Theorem}
\newtheorem{observation}{Observation}

\newtheorem{lemma}{Lemma}
\newtheorem{definition}{Definition}

\newcommand{\tuple}[1]{\ensuremath{\left \langle #1 \right \rangle }}
\newcommand{\pre}{\textit{pre}}
\newcommand{\params}{\textit{params}}
\newcommand{\eff}{\textit{eff}}
\newcommand{\name}{\textit{name}}
\newcommand{\type}{\textit{type}}
\newcommand{\cnf}{\textit{CNF}}
\newcommand{\conj}{\textit{Conj}}
\newcommand{\realm}{\ensuremath{M^*}\xspace}
\newcommand{\liftf}{F}
\newcommand{\liftl}{L}
\newcommand{\lifta}{A}
\newcommand{\sam}{\textit{SAM}\xspace}
\newcommand{\sgam}{\textit{SGAM}\xspace}
\newcommand{\bindings}{\textit{bindings}}
\newcommand{\iseff}{\text{IsEff}}
\newcommand{\ispre}{\text{IsPre}}

\usepackage{color}


%
%



%



\title{Safe Learning of Lifted Action Models\footnote{This Arxiv paper is an extended version of a paper with the same title that have been accepted to the International Conference on Principles of Knowledge Representation and Reasoning (KR), 2021.}}
\author{%
Brendan Juba $^1$\and
Hai S. Le$^1$\and
Roni Stern$^{2,3}$\\
\affiliations
$^1$Washington University in St. Louis, USA\\
$^2$Palo Alto Research Center, USA\\
$^3$Ben Gurion University of the Negev, Israel\\

\emails
\{bjuba, hsle\}@wustl.edu,
rstern@parc.com,
sternron@post.bgu.ac.il
}

\begin{document}

\maketitle

\begin{abstract}
Creating a domain model, even for classical, domain-independent planning, is a notoriously hard knowledge-engineering task. 
A natural approach to solve this problem is to learn a domain model from observations. 
However, model learning approaches frequently do not provide safety guarantees: the learned model may assume actions are applicable when they are not, and may incorrectly capture actions' effects. 
This may result in generating plans that will fail when executed. 
In some domains such failures are not acceptable, due to the cost of failure or inability to replan online after failure. 
In such settings, all learning must be done offline, based on some observations collected, e.g., by some other agents or a human. Through this learning, the task is to generate a plan that is guaranteed to be successful. 
This is called the model-free planning problem. 
Prior work proposed an algorithm for solving the model-free planning problem in classical planning. 
However, they were limited to learning grounded domains, and thus they could not scale. 
We generalize this prior work and propose the first safe model-free planning algorithm for lifted domains. 
We prove the correctness of our approach, and provide a statistical analysis showing that the number of trajectories needed to solve future problems with high probability is linear in the potential size of the domain model.
We also present experiments on twelve IPC domains showing that our approach is able to learn the real action model in all cases with at most two trajectories. 

\end{abstract}

\section{Introduction}



In classical domain-independent planning, a \emph{domain model} is a model of the environment and how the acting agent can interact with it. The domain model is given in a formal planning description language such as STRIPS~\cite{fikes1971strips} or the Planning Domain Definition Language (PDDL)~\cite{mcdermott1998pddl}. Domain-independent planning algorithms (planners) use the domain model to  generate a plan for achieving a given goal condition from a given initial state. 
Creating a domain model, however, is a notoriously hard knowledge-engineering task. 


To overcome this modeling problem, a variety of learning methods have been proposed. 
Model-free Reinforcement Learning (RL)
avoids the need for a domain model by learning directly how to act by performing actions and observing their outcomes. 
Other learning approaches aim to learn a world model from past observations, and use that model to solve future planning problems~\cite{amir2008}. Notably, Asai and Muise~\shortcite{asai2020learning} recently demonstrated this approach can even learn a PDDL model directly from (non-symbolic) images. 
However, all these approaches \textbf{permit the generation of failing actions}, i.e., actions that are either not applicable in the current state or do not achieve the intended effects.  
In some domains, this is acceptable and the agent simply incorporates such experiences and updates its internal model to improve future executions. 
In other domains, however, failing action must be avoided and only safe actions are allowed.  
This occurs when execution failure is too costly, or the agent cannot replan due to limited computational capabilities.
The problem of finding a safe plan, i.e., a plan that will not fail, without possessing a domain model, is called \textbf{safe model-free planning}~\cite{stern2017efficientAndSafe}. 
In safe model-free planning, instead of a domain model the planning agent is given a set of trajectories from plans that were executed in the past in the same domain (e.g., by a different agent or a human).

Stern and Juba~\shortcite{stern2017efficientAndSafe} proposed a sound algorithm for safe model-free planning, i.e., an algorithm that generates plans that do not fail, provided that the environment is actually captured by a (grounded) STRIPS model. 
However, their algorithm is not complete, i.e., it may not return a plan for a solvable planning problem. 
Nevertheless, they proposed a PAC-style model of learning to plan, in which completeness may be relaxed to ``approximate completeness'' with respect to the distribution of problems observed during training. They thus bounded the probability of encountering problems their model cannot solve, given a number of trajectories quasi-linear in the number of actions.
However, their positive result is limited to \emph{grounded} domain models, that is, domains that are not defined by \emph{lifted}, i.e., parameterized, actions and fluents. 
The size of a grounded domain model can be arbitrarily larger than its corresponding lifted domain model. In particular, a single lifted action can yield a number of grounded actions that grow polynomially with the number of objects in the domain, with the number of parameters of the lifted action as its exponent. 
In addition, learning a grounded domain model limits the  generalization possible between different groundings of the same lifted domain. For example, a grounded action model for a blocksworld domain with 8 blocks cannot be used to solve problems for a blocksworld domain with 9 blocks.  
This significantly limits the applicability of Stern and Juba's algorithm.


In this work, we overcome these limitations by presenting an algorithm that efficiently solves safe model-free planning problems for lifted domains. 
The key component of this approach is an algorithm that learns a \emph{safe action model}, which is a model of the agent's possible actions that is consistent with the underlying, unknown, domain  model. We call this algorithm \emph{Safe Action Model (SAM)} Learning.

Two versions of SAM learning are presented. The first 
may be used when each object is only ever bound to one action parameter at a time in the example trajectories.
We prove that this version is sound, and when the actions and fluents have bounded arity, we can guarantee that the action model is sufficient with high probability after observing a number of trajectories that is linear in the possible size of the lifted model.
Importantly, the number of trajectories needed depends only on the size of this lifted model, and is independent of the number of objects in the domain, in contrast to Stern and Juba's algorithm. 
We also observed efficient learning experimentally on twelve domains from the International Planning Competition (IPC)~\cite{ipc}: SAM learning is able to learn the real action model for all cases with at most two trajectories.
Finally, we discuss a more general version of SAM learning, for the case where multiple arguments are bound to the same object in some trajectories.










Our work also revisits the algorithm of Stern and Juba, and shows that it can be interpreted as solving a kind of knowledge-based learning task, similar to \emph{inductive logic programming}~\cite{muggleton1994inductive}, using the STRIPS axioms as background knowledge. We show in particular that the obtained model is the action model with the largest possible set of feasible plans (i.e., least constrained) that can be proven safe with the given trajectories, and in this sense is the \emph{strongest} safe action model. We show that our algorithms for lifted domains also enjoy this property.



\section{Background and Problem Definition}

Let $O$ be a set of objects and let $T$ be a set of types. 
Every object $o\in O$ is associated with a type $t\in T$ denoted $\type(o)$. 
For example, in the logistics domain from the International Planning Competition (IPC)~\cite{ipc} there are types \emph{truck} and \emph{location} and there may be objects $t_1$ and $t_2$ that represent two different trucks and two objects $l_1$ and $l_2$ that represent two different locations. 

\subsection{Lifted and Grounded Literals}
A \emph{lifted fluent} $\liftf$ is a pair $\tuple{\name, \params}$ representing a possible relation over typed objects, where  $\name$ is a symbol and $\params$ is a list of types. 
We denote the name of $\liftf$ and its parameters by $\name(\liftf)$ and $\params(\liftf)$ respectively, and $arity(\liftf,t)$ denotes the number of type-$t$ parameters. 
For example, in the logistics domain $at(?truck, ?location)$ is a lifted fluent that represents some truck ($?truck$) is at some location ($?location$). 
A \emph{binding} of a lifted fluent $\liftf$ is a function $b: \params(\liftf)\rightarrow O$ 
mapping every parameter of $\liftf$ to an object in $O$ of the indicated type. 
A \emph{grounded fluent} $f$ is a pair $\tuple{\liftf, b}$ where $\liftf$ is a lifted fluent 
and $b$ is a binding for \liftf. 
To \emph{ground} a lifted fluent $\liftf$ with a binding $b$ means to 
apply the
create a relation 
over the objects in the image of $b$ that match the relation over the corresponding parameters. 
We call this relation a \emph{grounded fluent} or simply a fluent, and denote it by $f$. 
In our logistics example, for $\liftf=at(?truck, ?location)$ and $b=\{?truck: truck1, ?location: loc1\}$ 
the corresponding grounded fluent $f$ is $at(truck1, loc1)$.
The term \emph{literal} refers to either a fluent or its negation. 
The definitions of binding, lifted, and grounded fluent transfer naturally to literals. 
A \emph{state} of the world is a set of grounded literals that, for every grounded fluent, either includes that fluent or its negation.

\subsection{Lifted and Grounded Actions}
A lifted action $\lifta\in \mathcal{A}$ is a pair $\tuple{\name, \params}$ 
where $\name$ is a symbol and $\params$ is a list of types, 
denoted $\name(\lifta)$ and $\params(\lifta)$, respectively, and $arity(\lifta,t)$ denotes the number of type-$t$ parameters. 
The action model $M$ for a set of actions $\mathcal{A}$ 
is a pair of functions $\pre_M$ and $\eff_M$ that map every action in $\mathcal{A}$ to its preconditions and effects. 
To define the preconditions and effects of a lifted action, 
we first define the notion of a \emph{parameter-bound literal}. 
A \emph{parameter binding} of a lifted literal $\liftl$ and an action $\lifta$ is a function $b_{\liftl,\lifta}: \params(\liftl)\rightarrow \params(\lifta)$ that maps every parameter of $\liftl$ to a parameter in $\lifta$. 
A \emph{parameter-bound literal} $l$ for the lifted action $\lifta$ is a 
pair of the form $\tuple{\liftl,b_{\liftl,\lifta}}$ where $b$ is a parameter binding of $\liftl$ and $\lifta$. 
$\pre_M(\lifta)$ and $\eff_M(\lifta)$ are sets of parameter-bound literals for $\lifta$. 

A \emph{binding} of a lifted action $\lifta$ is defined like a binding of a lifted fluent, i.e., a function $b:\params(\lifta)\rightarrow O$. 
A \emph{grounded action} $a$ is a tuple $\tuple{\lifta, b_\lifta}$ where $\lifta$ is a lifted action and $b_\lifta$ is a binding of $\lifta$. 
The preconditions of a grounded action $a$ according to the action model $M$, denoted $\pre_M(a)$, is the set of grounded literals created by taking every parameter-bound literal $\tuple{\liftl, b_{\liftl,\lifta}}\in \pre_M(\lifta)$ and grounding $\liftl$ with the binding $b_\lifta\circ b_{\liftl,\lifta}$. 
The effects of a grounded action $a$, denoted $\eff_M(a)$, are defined in a similar manner. 
The grounded action $a$ can be applied in a state $s$ iff $\pre_M(a)\subseteq s$. 
The outcome of applying $a$ to a state $s$ according to action model $M$, denoted $a_M(s)$, is a new state that contains all literals in $\eff_M(a)$ and all the literals in $s$ such that their negation is not in $\eff_M(a)$. 
Formally:
\begin{equation}\small
    a_M(s)=\{ l | l\in s \wedge \neg l\notin \eff_M(a) \vee l\in \eff_M(a) \} 
\end{equation}
We omit $M$ from $a_M(s)$ when it is clear from the context.
The outcome of applying a sequence of grounded actions $\pi=(a_1,\ldots a_n)$ to a state $s$ is the state $s'=a_n(\cdots a_1(s)\cdots)$. 
A sequence of actions $a_1,\ldots, a_n$ can be applied to a state $s$ 
if for every $i\in 1,\ldots,n$ the action $a_i$ is applicable in the state 
$a_{i-1}(\cdots a_1(s)\cdots)$. 

\begin{definition}[Trajectory]
A trajectory $T=\tuple{s_0, a_1, s_1, \ldots a_n, s_n}$ is an alternating sequence of states $(s_0,\ldots,s_n)$ and actions $(a_1,\ldots,a_n)$ that starts and ends with a state.
\end{definition}
The trajectory created by applying $\pi$ to a state $s$ is 
the sequence $\tuple{s_0, a_1, \ldots, a_{|\pi|}, s_{|\pi|}}$ such that 
$s_0=s$ and for all $0<i\leq |\pi|$, $s_i=a_i(s_{i-1})$. 
In the literature on learning action models~\cite{wang1994learning,wang1995learning,walsh2008efficient,stern2017efficientAndSafe,arora2018review}, 
it is common to represent a trajectory 
$\tuple{s_0, a_1, \ldots, a_{|\pi|}, s_{|\pi|}}$
as a set of triples 
$\big\{\tuple{s_{i-1},a_i,s_i}\big\}_{i=1}^{|\pi|}$.
Each triple $\tuple{s_{i-1},a_i,s_i}$ is called an \emph{action triplet},  and the states $s_{i-1}$ and $s_i$ are referred to as the pre- and post- state of action $a_i$. 
We denote by $\mathcal{T}(a)$ the set of all action triplets in the trajectories in $\mathcal{T}$ that include the grounded action $a$. $\mathcal{T}(\lifta)$ is defined for all action triplets that contain actions that are groundings of the lifted action $\lifta$.  

\subsection{Domains and Problems}

A classical planning \textbf{domain} is defined by a tuple 
$\tuple{T, \mathcal{F}, \mathcal{A}, M}$
where $T$ is a set of types, 
$\mathcal{F}$ is a set of lifted fluents, 
$\mathcal{A}$ is a set of lifted actions, 
and $M$ is an action model for $\mathcal{A}$.
A classical planning \textbf{problem} is defined by a tuple $\tuple{D, O,  s_I, G}$ where $D$ is a classical planning domain;  
$O$ is a set of objects; 
$s_I$ is the start state, i.e., the state of the world before planning;  
and $G$ is a set of grounded literals that define when the goal has been found. 
A \textbf{solution} to a planning problem is a sequence of grounded actions that can be applied to $s_I$ and if applied to $s_I$ results in a state $s_G$ that contains all the grounded literals in $G$. 
Such a sequence of grounded actions is called a \emph{plan}. 
The trajectory of a plan starts with $s_I$ and ends with a goal state $s_G$ (where $G\subseteq s_G$). 
The \emph{safe model-free planning} problem~\cite{stern2017efficientAndSafe} is defined as follows. 
\begin{definition}[Safe model-free planning]
Let $\Pi=\tuple{\tuple{T, \mathcal{F}, \mathcal{A}, \realm}, O, s_I, G}$ be a classical planning problem and let $\mathcal{T}=\{\mathcal{T}_1,\ldots, \mathcal{T}_m\}$ be a set of trajectories 
for other planning problems in the same domain. 
The input to a safe model-free planning algorithm is the tuple $\tuple{T,O, s_I, G, \mathcal{T}}$ and the desired output is a plan $\pi$ that is a solution to $\Pi$. We denote this safe model-free planning problem as $\Pi_{\mathcal{T}}$. 
\label{def:safe-model-free-planning}
\end{definition}
We refer to the action model $\realm$ as the real action model. 
The trajectories in $\mathcal{T}$ share the same domain as $\Pi$, 
and thus they have been generated by applying actions from $\mathcal{A}$ 
and following the action model specified in $\realm$. 
However, these trajectories may start in states that are not from $s_I$, 
may end in states that do not satisfy $G$, 
and may consider a set of objects that is different from $O$.  
Safety is captured in Definition~\ref{def:safe-model-free-planning} by requiring that the output plan $\pi$ is a \textbf{sound plan} for $\Pi$. That is, $\pi$ is applicable and ends up reaching a state that satisfies the goal.  
The main challenge is that the problem-solver -- the agent -- needs to find a sound plan to $\Pi$ but it is not given the set of fluents, actions, and action model of the domain ($\mathcal{F}$, $\mathcal{A}$, and \realm, respectively).

In this work, we make the following simplifying assumptions. 
Actions have deterministic effects, 
the agent has complete observability, 
and when the agent observes a grounded action $a=\tuple{\lifta, b_a}$, it is able to discern that $a$ is the result of grounding $\lifta$ with $b_a$. 
Similarly, if it observes a state with a grounded fluent $f=\tuple{\liftf, b_f}$, it is able to discern that $f$ is the result of grounding $\liftf$ with $b_f$. Also, we assume that actions' preconditions and effects are conjunctions of literals, as opposed to more complex logical statements, and we do not currently consider conditional effects of actions. 
These assumptions are reasonable when planning in digital/virtual environments, such as video games, or environments that have been instrumented with reliable sensors, such as warehouses designed to be navigated by robots~\cite{li2020lifelong}. 
Later in this paper, we discuss approaches to relax these assumptions and apply our work to a broader range of environments. 

\section{Conservative Planning in Grounded Domains}

Our approach for solving the model-free planning problem in lifted domains builds on the {conservative planning} approach proposed by Stern and Juba~\shortcite{stern2017efficientAndSafe} for grounded domains. 
Thus, we first describe their approach. 
This is done in a slightly different framing, which allows us to present a new theoretical property regarding the strength of the learned action model.  

\subsection{Inference Rules for Grounded Domains}
In a grounded domain, 
a state is a set of literals, 
and so are the preconditions and effects of all actions. 
That is, there is no notion of lifted literals of actions.

First, we define the notion of a consistent action model following the semantics of classical planning. 
\begin{definition}[Consistent Action Model]\label{def:consistent}
An action model $M$ is consistent with a set of trajectories $\mathcal{T}$ 
if for every action triplet $\tuple{s,a,s'}\in \mathcal{T}(a)$ 
it holds that:
\begin{compactenum}
    \item All preconditions are satisfied: $\forall l\in \pre(a) \forall s: l\in s$
    \item All effects are satisfied: $\forall l\in \eff(a) \forall s': l\in s'$
    \item Frame axioms\footnote{This means literals only change as a result of action effects.} hold: $\forall (l\notin \eff(a) \wedge l\notin s) \rightarrow l\notin s'$
\end{compactenum}
\end{definition}
\noindent
The contrapositives of the conditions in the above definition can be interpreted as inference rules as follows. 
\begin{observation}[Inference rules for grounded domains]\label{obs:sam-rules-grounded}
For any action triplet $\tuple{s,a,s'}$ it holds that:
\begin{compactitem}
    \item Rule 1 [not a precondition].  $\forall l \notin s: l \notin \pre(a)$
    \item Rule 2 [not an effect].  $\forall l \notin s': l \notin \eff(a)$
    \item Rule 3 [must be an effect].  $\forall l \in s'\setminus s: l \in \eff(a)$
\end{compactitem}
\end{observation}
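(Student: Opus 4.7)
The plan is to argue that the three rules are simply the contrapositives of the three clauses of Definition~\ref{def:consistent}, applied to the real action model $\realm$. The key conceptual step to state up front is that any observed action triplet $\tuple{s,a,s'}$ lies in a trajectory that was actually executed in the environment, and the environment's transitions are governed by $\realm$. Hence by construction $\realm$ is consistent with every such trajectory in the sense of Definition~\ref{def:consistent}, so each of the three consistency conditions holds of $\pre_{\realm}(a)$ and $\eff_{\realm}(a)$ whenever $\tuple{s,a,s'}$ is observed. The observation is then a purely logical rewriting.

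The derivations I would present are as follows. For Rule~1, condition~1 of Definition~\ref{def:consistent} says that every precondition literal of $a$ must appear in every pre-state of $a$; its contrapositive is that any literal $l$ absent from the pre-state $s$ of an observed triplet cannot be a precondition of $a$, which is exactly Rule~1. For Rule~2, condition~2 says every effect literal of $a$ appears in every post-state of $a$; contraposing yields that a literal absent from the observed post-state $s'$ cannot lie in $\eff(a)$, which is Rule~2.

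For Rule~3 I would first rewrite the frame axiom (condition~3) as the implication $l \notin \eff(a) \wedge l \notin s \Rightarrow l \notin s'$ and then take its contrapositive, which reads $l \in s' \Rightarrow l \in \eff(a) \vee l \in s$. Specializing to literals $l \in s' \setminus s$ (so that $l \notin s$) forces $l \in \eff(a)$, which is Rule~3. No obstacle really arises: the proof is effectively a three-line contraposition, and the only thing to flag explicitly is that Rule~3 uses the frame axiom in the slightly non-obvious direction that ``things can only appear in the post-state if they were already present or newly added by the action.''
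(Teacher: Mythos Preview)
Your proposal is correct and matches the paper's own treatment: the paper simply remarks that the three rules are the contrapositives of the three conditions in Definition~\ref{def:consistent}, exactly as you derive them. Your added remark that the real action model $\realm$ is automatically consistent with every observed trajectory, and your explicit unpacking of the frame-axiom contrapositive for Rule~3, are more detailed than the paper's one-line justification but follow the same route.
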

\noindent
So, Rule 1 states that a literal that is not in a pre-state cannot be a precondition. 
Rule 2 states that a literal that is not in a post-state cannot be an effect. 
Rule 3 states that a literal that is in the post-state but not in the pre-state, must be an effect. 
Since this is just a restatement of the definition of a consistent action model, these rules precisely characterize the action models that are consistent with a given set of traces.

In the fully observable deterministic world of classical planning, every action model that is not consistent with the given set of trajectories is false, and the set of consistent action models must contain the real action model. However, some of the consistent action models are different from the real action model, and plans generated with them may yield a failure, e.g., trying to apply an action in a state in which not all preconditions hold. 

\begin{definition}[Safe Action Model]
\label{def:safe_action_model}
An action model $M'$ is safe with respect to an action model $M$ iff for every state $s$ 
and grounded action $a$ 
it holds that
\begin{equation}
\small
    \pre_{M'}(a)\subseteq s \rightarrow 
    \Big(\pre_M(a)\subseteq s \wedge 
    a_{M'}(s)=a_M(s)\Big)
    \label{eq:safe_action_model}
\end{equation}
\end{definition}
\noindent
In words, Definition~\ref{def:safe_action_model} says that if action model $M'$ is safe w.r.t. $M$ then 
for every state $s$ and action $a$, if $a$ is applicable in $s$ according to $M'$ then
(1) $a$ is also applicable in $s$ according to $M$, 
and (2) applying $a$ to $s$ results in the same state according to both action models. 
We say that an action model is safe if it is a safe action model w.r.t. the real action model \realm.

Observe that any plan generated by a planner given a safe action model
must also be a sound plan according to \realm. 
The \emph{conservative planning} approach~\cite{stern2017efficientAndSafe} for safe model-free planning is based on this observation. 
In conservative planning, we first learn from the given set of trajectories an action model $M$ that is safe w.r.t.\ \realm, and then apply an off-the-shelf planner to generate plans using $M$. 
To learn such a safe action model, 
Stern and Juba~\shortcite{stern2017efficientAndSafe} proposed the following algorithm. 
First, assume every action $a$ has all literals as its preconditions and no literals as its effects. 
Then, iterate over every action triplet in $\mathcal{T}(a)$ 
and apply the rules in Observation~\ref{obs:sam-rules-grounded} to remove incorrect preconditions and to add effects. 
We refer to this algorithm hereafter as the \emph{Safe Grounded Action-Model (SGAM) Learning} algorithm, and discuss its theoretical properties. 

\subsection{Theoretical Analysis}

\begin{theorem}[SGAM Learning is sound~\cite{stern2017efficientAndSafe}]
SGAM learning produces a safe action model. 
\label{thm:sam-safe-grounded}
\end{theorem}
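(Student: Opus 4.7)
The plan is to verify both clauses of Definition~\ref{def:safe_action_model}: whenever $\pre_M(a)\subseteq s$, we must show (i) $\pre_\realm(a)\subseteq s$ and (ii) $a_M(s)=a_\realm(s)$, where $M$ is the model output by SGAM. The strategy is to track, at the level of individual literals, precisely what SGAM puts into $\pre_M(a)$ and $\eff_M(a)$ and relate each inclusion/exclusion back to the consistency conditions of Definition~\ref{def:consistent} applied to $\realm$.

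First I would establish (i) by showing $\pre_\realm(a)\subseteq \pre_M(a)$ for every grounded action $a$. SGAM initializes $\pre_M(a)$ to the set of all literals and only removes a literal $l$ via Rule~1, i.e., upon observing some triplet $\tuple{s,a,s'}\in\mathcal{T}(a)$ with $l\notin s$. Since $\realm$ is consistent with $\mathcal{T}$, any $l\in\pre_\realm(a)$ must lie in every such pre-state $s$, so Rule~1 never fires on it. Hence no real precondition is ever removed, and (i) follows immediately from the hypothesis $\pre_M(a)\subseteq s$.

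Next I would show $\eff_M(a)\subseteq \eff_\realm(a)$. SGAM adds $l$ to $\eff_M(a)$ only via Rule~3, which fires only when some observed triplet has $l\in s'\setminus s$. By the frame axiom in Definition~\ref{def:consistent} applied to $\realm$, any such $l$ must belong to $\eff_\realm(a)$.

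The main obstacle, and the heart of the argument, is to handle the gap $\eff_\realm(a)\setminus \eff_M(a)$ in order to deduce (ii). Here I would use the following observation: for any $l\in\eff_\realm(a)\setminus\eff_M(a)$, Rule~3 never fired for $l$ on any triplet of $a$, which means that on every observed triplet $\tuple{s,a,s'}\in\mathcal{T}(a)$ we had $l\in s$ (since $l\in s'$ is forced by $l\in\eff_\realm(a)$ and consistency). Consequently $l$ is never removed from $\pre_M(a)$ by Rule~1, so $l\in\pre_M(a)\subseteq s$ already, and applying the missing real effect $l$ to $s$ produces no change. The symmetric argument handles a missing negative effect $\neg l\in\eff_\realm(a)\setminus\eff_M(a)$: we conclude $\neg l\in\pre_M(a)\subseteq s$, so $l\notin s$ and deleting $l$ is a no-op. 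Combined with $\eff_M(a)\subseteq\eff_\realm(a)$, which ensures $M$ never introduces a spurious change, these two cases yield $a_M(s)=a_\realm(s)$ and complete the proof. Actions that were never observed retain the initial $\pre_M(a)$ containing both $l$ and $\neg l$ for some fluent, making the hypothesis $\pre_M(a)\subseteq s$ vacuous.
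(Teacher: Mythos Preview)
The paper does not actually prove Theorem~\ref{thm:sam-safe-grounded}; it is attributed to the prior work~\cite{stern2017efficientAndSafe} and stated without proof. That said, your argument is correct and follows essentially the same structure as the paper's proof of the lifted analogue, Theorem~\ref{safe-sam-thm}: establish $\pre_{\realm}(a)\subseteq\pre_M(a)$ and $\eff_M(a)\subseteq\eff_{\realm}(a)$ by tracking what Rules~1 and~3 can ever remove or add, and then close the gap $\eff_{\realm}(a)\setminus\eff_M(a)$ by observing that any such literal must have held in every observed pre-state and hence survives in $\pre_M(a)\subseteq s$. Your explicit remark that unobserved actions have unsatisfiable preconditions (both $l$ and $\neg l$) is a detail the paper's Theorem~\ref{safe-sam-thm} proof leaves implicit.
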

The main limitation of using a safe action model $M_{\textit{safe}}$ is that it may be \emph{weaker} than the real action model (\realm), in the sense that there may be states in which an action $a$ is applicable according to \realm, but not applicable according to $M_{\textit{safe}}$. 
Consequently, there may be planning problems that are solvable with \realm but not with $M_{\textit{safe}}$. 
This is stated in a more formal and general below. 
\begin{definition}[Strength of Action Models]
If there exists a trajectory that is consistent with $M'$ but not with $M$, then we say that $M$ is weaker than $M'$.
If no such trajectory exists then we say that $M$ is at least as strong as $M'$. 
\label{def:weakness}
\end{definition}
If $M$ is at least as strong as $M'$ then given enough computation time, every planning problem that is solvable with $M'$ is also solvable with $M$. 
Alternatively, if $M'$ is weaker than $M$ then there may be planning problems that cannot be solved using $M'$ but can be solved using $M'$. 
Next, we complement Theorem~\ref{thm:sam-safe-grounded} by showing that the action model returned by SGAM learning is at least as strong as every safe action model that is consistent 
with the given trajectories. 




\begin{theorem}[The Strength of SGAM Learning]
Let $M_{SGAM}$ be the action model created by SGAM learning given the set of trajectories $\mathcal{T}$. 
$M_{SGAM}$ is at least as strong as any action model $M'$ that is safe and consistent with $\mathcal{T}$. 
\label{thm:sam-learning-complete-grounded}
\end{theorem}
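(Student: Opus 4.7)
The plan is to establish that every trajectory consistent with $M'$ is also consistent with $M_{SGAM}$. Equivalently, for every action triplet $\tuple{s,a,s'}$ appearing in a trajectory consistent with $M'$, I check the three conditions of Definition~\ref{def:consistent} for $M_{SGAM}$: $\pre_{M_{SGAM}}(a) \subseteq s$, $\eff_{M_{SGAM}}(a) \subseteq s'$, and the frame axiom.

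The first step is to pin down $M_{SGAM}$ in closed form. A straightforward induction on the SGAM learning algorithm, using Rules~1 and~3 of Observation~\ref{obs:sam-rules-grounded}, yields
\[ \pre_{M_{SGAM}}(a) = \bigcap_{(t,a,t') \in \mathcal{T}(a)} t, \qquad \eff_{M_{SGAM}}(a) = \bigcup_{(t,a,t') \in \mathcal{T}(a)} (t' \setminus t). \]
The safety of $M_{SGAM}$ (Theorem~\ref{thm:sam-safe-grounded}) will be used as a black box.

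Verification of the effect and frame conditions for $M_{SGAM}$ proceeds by chaining safety through $\realm$. Any literal $l \in \eff_{M_{SGAM}}(a)$ arises from some triplet in $\mathcal{T}$ in which $l$ flips from absent to present, which by $\realm$'s consistency with $\mathcal{T}$ forces $l \in \eff_{\realm}(a)$. Since the trajectory is consistent with $M'$ and $M'$ is safe with respect to $\realm$, we have $s' = a_{M'}(s) = a_{\realm}(s)$, which places $l$ in $s'$ and, by the same transfer, validates the frame condition for $M_{SGAM}$ on $\tuple{s,a,s'}$.

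The principal difficulty lies in showing $\pre_{M_{SGAM}}(a) \subseteq s$. Consistency of $M'$ with $\mathcal{T}$ by itself only yields $\pre_{M'}(a) \subseteq \pre_{M_{SGAM}}(a)$, which is the ``wrong'' direction for our purpose. The plan to close this gap is to argue that, for $M'$ to be both safe and consistent, every literal that appears in all pre-states of $a$ in $\mathcal{T}$ must also lie in $\pre_{M'}(a)$: such a literal cannot be ruled out as a precondition of $\realm$ on the basis of $\mathcal{T}$ alone, so an $M'$ that omitted it would authorize executing $a$ in a state violating $\realm$'s preconditions, contradicting the safety hypothesis on $M'$. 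Establishing this inclusion---that safety together with the observed evidence forces $\pre_{M'}(a) \supseteq \pre_{M_{SGAM}}(a)$---is the crux and the most delicate step of the proof.
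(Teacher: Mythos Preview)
Your proposal is correct and follows essentially the same approach as the paper: both pivot on the argument that any literal present in every observed pre-state of $a$ cannot be ruled out as a precondition of $\realm$, so any safe $M'$ must include it, giving $\pre_{M_{SGAM}}(a)\subseteq\pre_{M'}(a)$ and hence $\pre_{M_{SGAM}}(a)\subseteq s$. Your treatment of the effect and frame conditions via $\eff_{M_{SGAM}}(a)\subseteq\eff_{\realm}(a)$ together with $s'=a_{\realm}(s)$ is equivalent to the paper's direct appeal to Theorem~\ref{thm:sam-safe-grounded} to conclude $a_{M_{SGAM}}(s)=a_{\realm}(s)=a_{M'}(s)$; if anything, you are more explicit than the paper about which inclusion direction is needed and why consistency alone does not suffice.
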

\begin{proof}
Consider an action model $M'$, which is safe and consistent with $\mathcal{T}$. 
Let $a$ be an action and $s$ be a state such that $a$ is applicable in $s$ according to $M'$, i.e., $\pre_{M'}(a)\in s$. 
Since $M'$ is safe w.r.t.\ \realm, then 
$\pre_{\realm}(a)\subseteq s$
and $a_{M'}(s)=a_{\realm}(s)$. 
By construction of $M_\sgam$, if a literal $l$ is a precondition of $a$ according to $M_\sgam$, 
then it has appeared in the pre-state of all action triplets in $\mathcal{T}(a)$. 
Thus, there exists a consistent action model in which $l$ is a precondition of $a$ 
and this action model may be the real model. 
Therefore, since $M'$ is safe it follows that $\pre_{M'}(a)\subseteq \pre_{M_\sgam}(a)$, 
and thus $a$ is applicable in $s$ according to $M_\sgam$, 
i.e., $\pre_{M_\sgam}(a)\in s$. 
Since $M_\sgam$ is safe, 
$a_{M_\sgam}(s)=a_{\realm}(s)=a_M'(s)$.
Thus, every trajectory consistent with  $M'$ will also be consistent with $M_\sgam$.
\end{proof}

While the action model returned by SGAM is at least as strong as any other safe action model, it may still be weaker than the real action model. Consequently, conservative planning for model-free planning is bound to be sound but incomplete---it generates plans that are sound but it may fail to generate plans for some solvable planning problems.

A statistical analysis showed that under some assumptions, the number of trajectories SGAM learning needs 
to learn a safe action model that can solve most problems is quasilinear in the number of actions in the domain~\cite{stern2017efficientAndSafe}. 
However, the number of grounded actions in a \emph{lifted domain} can be quite large: the number of grounded actions that are groundings of a single lifted action grows polynomially with the number of objects in the domain (exponentially in the number of parameters). 
On the other hand, in a lifted domain, the real action model is assumed to be defined by lifted actions. 
This enables us to generalize SGAM learning across multiple groundings of the same lifted action, eliminating the dependence on the number of objects in the number of trajectories needed to learn a useful safe action model. We describe this in the next section.

\section{Conservative Planning for Lifted Domains}


In this section, we describe a conservative planning approach for safe model-free planning in lifted domains, 
which is based on a novel generalization of SGAM learning to lifted domains. We refer this algorithm as simply SAM learning. 
To describe SAM learning, 
we denote by $\bindings(b_\lifta, b_\liftl)$ the set of all 
parameter bindings $b_{\liftl, \lifta}$ that satisfy the following
\begin{equation}
\small
    b_\lifta\circ b_{\liftl,\lifta} = b_\liftl.
    \label{eq:inferbind}
\end{equation}





\subsection{Inference Rules for Lifted Domains}
The core of our algorithm is the following generalization of Observation~\ref{obs:sam-rules-grounded}, 
defining what observing an action triplet with a grounded action $\tuple{\lifta, b_\lifta}$ entails for the lifted action $\lifta$.
\begin{observation}\label{obs:sam-rules-lifted-general}
For any action triplet $\tuple{s, \tuple{\lifta, b_\lifta}, s'}$
\begin{compactitem}
    \item Rule 1 [not a precondition].  
    $\forall \tuple{\liftl, b_\liftl} \notin s:$
    \begin{equation}\small
     \forall b\in \bindings(b_\lifta, b_\liftl): 
     \tuple{\liftl,b} \notin \pre(\lifta)
    \end{equation}
    \item Rule 2 [not an effect].  
    $\forall \tuple{\liftl, b_\liftl} \notin s'$:
    \begin{equation}\small
     \forall b\in \bindings(b_\lifta, b_\liftl): 
     \tuple{\liftl,b} \notin \eff(\lifta)
    \end{equation}
    \item Rule 3 [an effect]. 
    $\forall \tuple{\liftl,b_\liftl} \in s'\setminus s:$
    \begin{equation}\small
    \exists b\in\bindings(b_\lifta, b_\liftl): 
    \tuple{\liftl,b} \in \eff(\lifta)
    \end{equation}
    I.e., in ILP terminology, the grounded literal $\tuple{\liftl,b_\liftl}$ is \emph{subsumed} by some $\tuple{\liftl,b} \in \eff(\lifta)$.
\end{compactitem}
\end{observation}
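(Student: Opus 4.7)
The plan is to derive all three rules as direct lifted reformulations of the consistency conditions in Definition~\ref{def:consistent}, applied to the grounded action $a = \tuple{\lifta, b_\lifta}$ that occurs in the triplet $\tuple{s, a, s'}$. The bridge between the lifted world and the grounded world is the identity, built into the definition of the grounded preconditions and effects, that a parameter-bound literal $\tuple{\liftl, b_{\liftl,\lifta}} \in \pre(\lifta)$ corresponds to the grounded precondition $\tuple{\liftl, b_\lifta \circ b_{\liftl,\lifta}} \in \pre(a)$, and similarly for effects. The set $\bindings(b_\lifta, b_\liftl)$ was defined in equation~\eqref{eq:inferbind} exactly so that $b_{\liftl,\lifta} \in \bindings(b_\lifta, b_\liftl)$ iff $b_\lifta \circ b_{\liftl,\lifta} = b_\liftl$, so the lifted rules just repackage the grounded ones under this correspondence.

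For Rule~1, I would argue by contrapositive. Suppose $\tuple{\liftl, b_\liftl} \notin s$ but there exists some $b \in \bindings(b_\lifta, b_\liftl)$ with $\tuple{\liftl, b} \in \pre(\lifta)$. Then the grounded precondition $\tuple{\liftl, b_\lifta \circ b}$ of $a$ is, by the defining property of $\bindings$, exactly $\tuple{\liftl, b_\liftl}$. Since the real model is consistent with the trajectory, condition~1 of Definition~\ref{def:consistent} forces $\tuple{\liftl, b_\liftl} \in s$, a contradiction. Rule~2 follows in exactly the same way by substituting $\eff$ for $\pre$, $s'$ for $s$, and invoking condition~2 instead of condition~1.

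Rule~3 is the one that uses the frame axiom. Given $\tuple{\liftl, b_\liftl} \in s' \setminus s$, consistency condition~3 applied to the grounded action $a$ forces $\tuple{\liftl, b_\liftl} \in \eff(a)$ (otherwise a literal absent from $s$ and from $\eff(a)$ could not appear in $s'$). Unpacking the definition of $\eff(a)$ in terms of the lifted action, there must exist a parameter-bound literal $\tuple{\liftl, b_{\liftl,\lifta}} \in \eff(\lifta)$ whose grounding under $b_\lifta$ is $\tuple{\liftl, b_\liftl}$, i.e., $b_\lifta \circ b_{\liftl,\lifta} = b_\liftl$. By the definition of $\bindings$ this is exactly the statement that $b_{\liftl,\lifta} \in \bindings(b_\lifta, b_\liftl)$, which is what Rule~3 asserts.

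The main obstacle is not mathematical depth but bookkeeping: one must be careful that the $\bindings$ function is defined on the correct domain and codomain (parameter bindings of $\liftl$ into parameters of $\lifta$) and that the composition $b_\lifta \circ b_{\liftl,\lifta}$ really yields the object-level binding $b_\liftl$ appearing in the observed literal. Once this correspondence is stated cleanly, each of the three rules is a one-line consequence of the corresponding clause of Definition~\ref{def:consistent}, exactly mirroring the grounded inference rules in Observation~\ref{obs:sam-rules-grounded}.
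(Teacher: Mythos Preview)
Your proposal is correct and matches the paper's approach: the paper presents Observation~\ref{obs:sam-rules-lifted-general} as a direct generalization of Observation~\ref{obs:sam-rules-grounded}, which in turn is explicitly described as the contrapositives of the three consistency conditions in Definition~\ref{def:consistent}. The paper does not spell out a proof for the lifted version, so your derivation---using the correspondence $b_\lifta \circ b_{\liftl,\lifta} = b_\liftl$ from Eq.~\eqref{eq:inferbind} to translate each grounded rule into its lifted form---simply makes explicit what the paper leaves implicit.
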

\noindent
For much of this paper, we make the following assumption:
\begin{definition}[Injective Action Binding]
In every grounded action $\tuple{\lifta, b_\lifta}$, the binding $b_\lifta$ is an injective function, 
i.e., every parameter of $\lifta$ is mapped to a different object. 
\label{def:injective}
\end{definition}
Under this assumption, for every pair of bindings $b_\liftl$ and $b_\lifta$ 
there exists a unique $b_{\liftl,\lifta}$ that satisfies Eq.~\ref{eq:inferbind}. 
This binding is obtained by inverting $b_\lifta$, i.e., 
\begin{equation}\small
    \bindings(b_\lifta, b_\liftl) = \{ (b_\lifta)^{-1}\circ b_\liftl \}.
    \label{eq:inferbind-inverse}
\end{equation}
where $(b_\lifta)^{-1}$ maps an object $o$ to the parameter of $\lifta$ that $b_\lifta$ maps to $o$. 
That is, each grounded literal and action that appears in a trajectory is essentially a renaming of the corresponding parameters in the lifted literals and actions by objects; the grounded literals and actions are \emph{OI-subsumed} by the lifted literals and actions \cite[Section 5.5.1]{deraedt08}.
Equation~\ref{eq:inferbind-inverse} simplifies the inference rules given in Observation~\ref{obs:sam-rules-lifted-general}. 
In particular, the ``an effect'' rule (Rule 3) becomes
\begin{equation}\small
    \forall \tuple{\liftl,b_\liftl} \in s'\setminus s: 
     \tuple{\liftl,(b_\lifta)^{-1}\circ b_\liftl} \in \eff(\lifta).
\end{equation}


\subsection{SAM Learning for Lifted Domains}

\begin{algorithm}[t]
\footnotesize
\DontPrintSemicolon
\SetKwInOut{Input}{Input}\SetKwInOut{Output}{Output}
\Input{$\Pi_\mathcal{T} = \tuple{T,O, s_I, G, \mathcal{T}}$}
\Output{An action model that is safe w.r.t. the action model that generated $\mathcal{T}$}
\BlankLine
    $\mathcal{A'}\gets$ all lifted actions observed in $\mathcal{T}$\\
    \ForEach{lifted action $\lifta\in \mathcal{A'}$ \nllabel{sam:line:loop-start}}{
        $\eff(\lifta)\gets\emptyset$ \\ 
        $\pre(\lifta)\gets $ all parameter-bound literals \nllabel{line:init_pre} \\
        \ForEach{$(s, \tuple{\lifta,b_\lifta}, s')\in \mathcal{T}(\lifta)$}{
            \ForEach{$\tuple{L,b_{\liftl,\lifta}}\in\pre(\lifta)$}{
                \If{$\tuple{L, b_\lifta\circ b_{\liftl,\lifta}}\notin s$}{
                    Remove $\tuple{L,b_{\liftl,\lifta}}$ from $\pre(\lifta)$ \nllabel{line:remove-pre}
                }
            }
            \ForEach{$\tuple{L,b_\liftl}\in s'\setminus s$}{
                $b_{\liftl, \lifta}\gets \tuple{\liftl, (b_\lifta)^{-1}\circ b_\liftl}$\big) \nllabel{line:infer-effect}\\
                Add $\tuple{\liftl, b_{\liftl,\lifta}}$ to $\eff(\lifta)$ \nllabel{line:add-eff}\\
            }
            \nllabel{sam:line:loop-end}
        }
    } 
    \Return $(\pre, \eff)$ \nllabel{line:return}
\caption{Safe Action-Model (SAM) Learning}\label{safeplan}
\label{alg:sam}
\end{algorithm}

We now present our SAM Learning algorithm for lifted domains in Algorithm~\ref{safeplan}. 
For every lifted action $\lifta$ observed in some trajectory, 
we initially assume that $\lifta$ has no effects and all possible parameter-bound literals are its preconditions (line~\ref{line:init_pre} in Algorithm~\ref{alg:sam}).\footnote{It is possible 
to initialize the preconditions of every lifted action 
to the pre-state of one of the action triplets in which it is used.}  
Then, for every action triplet $(s, \tuple{\lifta, b_\lifta}, s')$ with this lifted action, we remove from the preconditions of $\lifta$ every parameter-bound literal $\tuple{\liftl, b_{\liftl,\lifta}}$ that is not satisfied in the current pre-state (Rule 1 in Observation~\ref{obs:sam-rules-lifted-general}). 
Then, for every grounded literal $\tuple{\liftl, b_\liftl}$ that holds in the post-state $s'$ and not in $s$, we add a corresponding effect to $\lifta$ (Rule 3 in Observation~\ref{obs:sam-rules-lifted-general}). 
Note that Rule 2 in Observation~\ref{obs:sam-rules-lifted-general} is not needed since we initialize the set of effects of every action to be an empty set. 

\begin{theorem}
Given a set of trajectories $\mathcal{T}$, SAM learning (Algorithm~\ref{alg:sam}) runs in time 
\begin{small}
\[\mathcal{O}\Big(\sum_{\lifta\in \mathcal{A}}|\mathcal{T}(\lifta)|\sum_{\liftf\in\mathcal{F}}\prod_{t\in T}arity(\lifta,t)^{arity(\liftf,t)}\Big)\]
\end{small}
\end{theorem}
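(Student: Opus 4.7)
The plan is to bound the work done by SAM Learning (Algorithm~\ref{alg:sam}) by separately analyzing the initialization of $\pre(\lifta)$ and the per-triplet inner loops, and then summing over all lifted actions. Let $P(\lifta) := \sum_{\liftf\in\mathcal{F}}\prod_{t\in T}arity(\lifta,t)^{arity(\liftf,t)}$; my first task is to prove that the set of parameter-bound literals of $\lifta$, which is how $\pre(\lifta)$ is initialized on line~\ref{line:init_pre}, has size exactly $P(\lifta)$ (up to a factor of 2 for negation). Since every parameter-bound literal is a pair $\tuple{\liftl,b_{\liftl,\lifta}}$ where $b_{\liftl,\lifta}$ maps each of the $arity(\liftf,t)$ parameters of $\liftf$ of type $t$ to one of the $arity(\lifta,t)$ parameters of $\lifta$ of type $t$, the number of such bindings is $\prod_t arity(\lifta,t)^{arity(\liftf,t)}$ for each lifted fluent $\liftf$, and summing over $\liftf\in\mathcal{F}$ gives $P(\lifta)$. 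This is a pure counting argument and should pose no difficulty.

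Next I would analyze the two inner loops for a fixed action triplet $\tuple{s,\tuple{\lifta,b_\lifta},s'}\in\mathcal{T}(\lifta)$. The precondition-pruning loop iterates over the current $\pre(\lifta)$, which has size at most $P(\lifta)$; each iteration performs an inverse-binding lookup and a membership test in $s$, which are $\mathcal{O}(1)$ under the standard hashing assumption (and are in any case dominated by the $P(\lifta)$ factor). The effect-inference loop iterates over literals in $s'\setminus s$, and here I would invoke the injective binding assumption (Definition~\ref{def:injective}) and the fact that $\mathcal{T}$ is generated by a consistent lifted action model: every literal that changes between $s$ and $s'$ must be an instance of some parameter-bound effect of $\lifta$, hence $|s'\setminus s|\le P(\lifta)$. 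Thus each triplet contributes $\mathcal{O}(P(\lifta))$ work.

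Summing over the $|\mathcal{T}(\lifta)|$ triplets containing $\lifta$, the per-action cost is $\mathcal{O}(|\mathcal{T}(\lifta)|\cdot P(\lifta))$, and the initialization cost $\mathcal{O}(P(\lifta))$ is absorbed into this whenever $|\mathcal{T}(\lifta)|\ge 1$ (lifted actions that never appear in $\mathcal{T}$ are excluded by the construction of $\mathcal{A}'$). Summing over $\lifta\in\mathcal{A}$ yields the stated bound
\[\mathcal{O}\Big(\sum_{\lifta\in\mathcal{A}}|\mathcal{T}(\lifta)|\sum_{\liftf\in\mathcal{F}}\prod_{t\in T}arity(\lifta,t)^{arity(\liftf,t)}\Big).\]

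The main obstacle I anticipate is the bound $|s'\setminus s|\le P(\lifta)$: this is not immediate from the algorithm itself but relies on the semantics of the real model and the injective-binding assumption so that every effect literal is expressible as a grounding of a parameter-bound literal of $\lifta$. A cleaner alternative would be to bound $|s'\setminus s|$ by the total number of grounded literals over objects in the image of $b_\lifta$, which is also at most $P(\lifta)$ by the same counting argument applied with action parameters treated as ``virtual objects''. Routine data-structure assumptions (hash-based sets for $s$, $s'$, $\pre(\lifta)$, and $\eff(\lifta)$; constant-time binding composition and inversion) absorb the remaining lower-order factors.
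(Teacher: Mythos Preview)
Your proposal is correct and follows essentially the same approach as the paper: both count the parameter-bound literals for each lifted action as $\sum_{\liftf}\prod_t arity(\lifta,t)^{arity(\liftf,t)}$, multiply by $|\mathcal{T}(\lifta)|$, and sum over $\lifta$. The paper's proof is terser---it simply asserts that in the worst case each triplet ``checks every possible parameter-bound literal \ldots\ if it is not a precondition and if it is an effect''---whereas you more carefully justify the effect loop by bounding $|s'\setminus s|\le P(\lifta)$, a detail the paper glosses over.
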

\begin{proof}
For every action $\lifta\in\mathcal{A}$, SAM learning iterates over all action triplets in $\mathcal{T}(\lifta)$ 
and, in the worst case, checks every possible parameter-bound literal $\tuple{\liftl, b_{\liftl,\lifta}}$ if it is not a precondition and if it is an effect. 
There are $arity(\lifta, t)^{arity(\liftl, t)}$ ways to bind the parameters of $\liftl$ of type $t$ to the parameters of $\lifta$, and hence $\prod_{t\in T}arity(\lifta, t)^{arity(
\liftl, t)}$ parameter-bound literals with $\lifta$ and $\liftl$. 
\end{proof}

\subsection{Safety Property}

We extend the notion of a \emph{safe action model} to lifted domains as follows. An action model $M$ in a lifted domain is safe iff 
every grounded action defined by $M$ satisfies Eq.~\ref{eq:safe_action_model}. 
This definition preserves the property that a safe action model is an action model that enables generating plans that are guaranteed to be sound w.r.t.\ $\realm$. 
We show next that \sam Learning for lifted domains indeed returns a safe action model. 



\begin{theorem}\label{safe-sam-thm}
Given the injective action binding assumption,
SAM Learning (Algorithm~\ref{alg:sam}) creates a safe action model. 
\end{theorem}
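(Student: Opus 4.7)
The plan is to verify both conditions in Definition~\ref{def:safe_action_model} for the learned model, denoted here by $M$, by first establishing two inclusions at the lifted level: $\pre_\realm(\lifta) \subseteq \pre_M(\lifta)$ and $\eff_M(\lifta) \subseteq \eff_\realm(\lifta)$ for every lifted action $\lifta$ observed in $\mathcal{T}$. The precondition inclusion follows because line~\ref{line:remove-pre} only removes a parameter-bound literal when some observed pre-state refutes it; since every trajectory was generated by $\realm$, any real precondition must appear in every pre-state of every triplet in $\mathcal{T}(\lifta)$ and thus survive. The effect inclusion is where the injective binding assumption does essential work: by Eq.~\ref{eq:inferbind-inverse} the parameter-bound witness $\tuple{\liftl,(b_\lifta)^{-1}\circ b_\liftl}$ added on line~\ref{line:add-eff} is uniquely determined by $b_\lifta$ and $b_\liftl$, and that witness must lie in $\eff_\realm(\lifta)$ because the triplet was generated by $\realm$.

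Grounding both lifted inclusions through a binding $b_a$ yields $\pre_\realm(a) \subseteq \pre_M(a)$ and $\eff_M(a) \subseteq \eff_\realm(a)$ for every grounded $a=\tuple{\lifta,b_a}$. Whenever $\pre_M(a)\subseteq s$, condition (1) of Definition~\ref{def:safe_action_model} is immediate. The heart of the argument is condition (2), and I would prove it via an auxiliary ``missed effects become preconditions'' lemma: if $\tuple{\liftl,b_{\liftl,\lifta}} \in \eff_\realm(\lifta)\setminus \eff_M(\lifta)$, then in every triplet $(s,\tuple{\lifta,b_\lifta},s')\in \mathcal{T}(\lifta)$ the grounded literal $\tuple{\liftl,b_\lifta\circ b_{\liftl,\lifta}}$ must already have held in $s$ (otherwise line~\ref{line:add-eff} would have added the parameter-bound literal to $\eff_M(\lifta)$). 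Hence line~\ref{line:remove-pre} never removed $\tuple{\liftl,b_{\liftl,\lifta}}$, so it sits in $\pre_M(\lifta)$. Grounding, every real effect of $a$ that $M$ misses is a precondition of $a$ in $M$, and therefore lies in $s$.

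To finish, I would do a case analysis on each grounded literal $l$ to show $l \in a_M(s) \iff l \in a_\realm(s)$. The delicate case is $l \in \eff_\realm(a)\setminus \eff_M(a)$: the lemma places $l\in s$, and because $\neg l$ lies in neither effect set (using the real model's own consistency together with $\eff_M(a)\subseteq \eff_\realm(a)$), both frame-axiom branches agree and $l$ belongs to both post-states. Its dual, where $\neg l$ is a missed real effect while $l\in s$, is impossible, since the lemma would then force $\neg l\in s$, contradicting $l\in s$. The remaining cases are direct applications of $\eff_M(a)\subseteq \eff_\realm(a)$ together with the frame axiom. The main obstacle I anticipate is precisely this coordination of frame axioms with underestimated effects, and it is exactly here that the injective binding assumption is indispensable: without the uniqueness afforded by Eq.~\ref{eq:inferbind-inverse}, line~\ref{line:add-eff} could inject parameter-bound literals that are not in $\eff_\realm(\lifta)$, breaking $\eff_M(\lifta)\subseteq \eff_\realm(\lifta)$ and collapsing the rest of the argument.
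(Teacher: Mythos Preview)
Your proposal is correct and follows essentially the same approach as the paper: establish the two lifted inclusions $\pre_\realm(\lifta)\subseteq\pre_M(\lifta)$ and $\eff_M(\lifta)\subseteq\eff_\realm(\lifta)$ (the paper does this via an induction over loop iterations, invoking Rules~1 and~3 of Observation~\ref{obs:sam-rules-lifted-general} and the injective binding assumption exactly as you do), then argue that any real effect missing from $\eff_M(\lifta)$ must have held in every pre-state and hence survives in $\pre_M(\lifta)$. Your final case analysis on literals is in fact more explicit than the paper's concluding sentence, which simply notes that every literal in the real post-state lies in $\eff_M$ or $\pre_M$ without spelling out the bidirectional equality; your treatment of the ``missed effect'' and its dual is a welcome clarification rather than a departure.
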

\begin{proof}
We first show by induction on the iterations of the loop in lines~\ref{sam:line:loop-start}--\ref{sam:line:loop-end} that on every iteration
\begin{equation}
    \pre_{\realm}(\lifta)\subseteq\pre(\lifta) \text{ and }
\eff(\lifta)\subseteq\eff_{\realm}(\lifta)
\label{eq:inclusion}
\end{equation}
where $\realm$ is the correct action model. 
Prior to the first iteration, the preconditions of all lifted actions $\lifta$ are all possible parameter-bound literals, so $\pre(\lifta)$ must be a subset of $\pre_{\realm}(\lifta)$.
This includes every parameter-bounded fluent \emph{and} its negation. 
Similarly, the effects are set to $\emptyset$, which is surely a subset of $\eff_{\realm}(\lifta)$.
The changes made to $\pre(\lifta)$ and $\eff(\lifta)$ in subsequent iterations are encapsulated in lines~\ref{line:remove-pre} and~\ref{line:add-eff} in Algorithm~\ref{alg:sam}. 
Line~\ref{line:remove-pre} is a direct application
of Rule 1 (``not a precondition'') from Observation 2, 
and thus $\pre(\lifta)$ is still a subset of $\pre_{\realm}$. 
Similarly, line~\ref{line:add-eff} is an application of Rule 3 (``an effect'') in the same observation, given that $\bindings(b_\lifta, b_\liftl)$ consists of a single parameter binding due to the injective binding assumption. 
This completes the induction.

Let $(\pre, \eff)$ be the action model returned by Algorithm~\ref{alg:sam} (line~\ref{line:return}).  
From the induction above (Eq.~\ref{eq:inclusion}) it immediately follows that for every grounded action $\tuple{\lifta, b_{\lifta}}$ and state $s$, if 
$\pre(\tuple{\lifta, b_{\lifta}})\subseteq s$ then 
$\pre_{\realm}(\tuple{\lifta, b_{\lifta}})\subseteq s$. 
From the induction above, all the parameter-bound literals in $\eff(\lifta)$ are indeed effects of $\lifta$. 
Finally, consider any parameter-bound literal $\tuple{\liftl, b_{\liftl,\lifta}}$
that is an effect of $\lifta$ but is absent from $\eff(\lifta)$, i.e., every 
$\tuple{\liftl, b_{\liftl,\lifta}}\in \eff_{\realm}(A)\setminus\eff(A)$. 
By construction of $\eff$, this can only occur if this parameter-bound literal was true in all pre-states of groundings of $\lifta$ in all the available trajectories. 
Consequently, $\tuple{\liftl, b_{\liftl,\lifta}}$ must be in $\pre(\lifta)$. 
Therefore, every grounded literal in the post-state of applying $\tuple{\lifta, b_{\lifta}}$ in $s$ (i.e., $\tuple{\lifta, b_{\lifta}}_{\realm}(s)$) is either 
in $\eff(\tuple{\lifta, b_{\lifta}})$ or 
$\pre(\tuple{\lifta, b_{\lifta}})$. 
\end{proof}



\subsection{An Example of SAM Learning}

\begin{table}
\small
\begin{tabular}{|l|l|l|l|}
\hline
Action & Params           & Precond.          & Effects               \\\hline
Move   & ?tr - truck      & at(tr, from) & at(tr, to),       \\
       & ?from - location &              & not(at(tr, from)) \\
       & ?to - location   &              &                   \\\hline
Load   & ?pkg - package   & at(tr, loc)  & on(pkg, tr),      \\
       & ?tr - truck      & at(pkg, loc) & not(at(pkg, loc)) \\
       & ?loc - location  &              &                   \\\hline
Unload & ?pkg - package   & at(tr, loc), & not(on(pkg,tr),   \\
       & ?tr - truck      & on(pkg, tr)  & at(pkg, loc)      \\
       & ?loc - location  &              &               \\   \hline
\end{tabular}
\caption{The parameters, preconditions, and effects of the actions according to the real action model of our simple logistics example.}
\label{tab:example-actions}
\end{table}

Consider the following simple logistics problem. 
There are five objects: one \textit{truck} object (tr), one \textit{package} object (pkg), and three \textit{locations} objects ($A$, $B$, and $C$).
\textit{at(?truck, ?location)}  and \textit{on(?truck, ?package)} are lifted fluents representing that the truck is in the location and the package is on the track, respectively. 
There are three possible actions: \textit{Move}, \textit{Load}, and \textit{Unload}. 
Table~\ref{tab:example-actions} lists the parameters, preconditions, and effects of these actions in \realm.
Now, assume we are given three trajectories $T_1$, $T_2$, and $T_3$. 
$T_1$ starts with the truck and the package at location $A$, 
and performs two move actions: Move(tr, A, B) and Move(tr, B, C). 
$T_2$ starts in the same state, but performs Load(pkg, tr, A) and Move(tr, A, B). 
$T_3$ starts with the truck at location $A$ and the package at location $B$, 
and performs Move(tr, A, B), Load(pkg, tr, B), Move(tr, B, C), and Unload(pkg, tr, C). 
Given only the first trajectory $T_1$, the action model returned by SAM Learning already contains the real action model for the lifted Move action, 
since the only grounded fluents that can be bound to the parameters of the grounded action Move(tr, A, B) are 
at(tr, A) and not(at(tr, B)) in the pre-state, 
and at(tr, B) and not(at(tr, A)) in the post-state. 
In contrast, SAM Learning for grounded domains will not know anything about the preconditions and effects of the grounded action Move(tr, B, C) unless it is also given the trajectory $T_3$. 
Similarly, given the second trajectory $T_2$, the action model returned by SAM Learning contains the real action model for the lifted Load action, 
since the only grounded fluents that can be bound to the parameters of the grounded action 
Load(pkg, tr, A) are 
at(tr, A), at(pkg, A), and not(on(pkg, tr)) in the pre-state
and at(tr, A), not(at(pkg, A)), and on(pkg, tr)) in the post-state. 
In fact, given $T_1$, $T_2$, and $T_3$, SAM Learning is able to learn the real action model for this domain. Note that since there are 10 grounded actions in this domain (four Move actions and three Load and Unload actions), SGAM Learning will require at least 10 action triplets to learn an action model with all of the actions. 
\section{Sample Complexity Analysis}

Planning with a safe action model is a sound approach for safe model-free planning, since every plan it outputs is a sound plan according to the real action model. 
However, it is not complete:
a planning problem may be solvable with the real action model, but not the learned one. 
As in prior work on safe model-free planning~\cite{stern2017efficientAndSafe}, we can bound the likelihood of facing such a problem 
as follows. 


Let $\mathcal{P}_D$ be a probability distribution over solvable planning problems in a domain $D$. 
Let $\mathcal{T}_D$ be a probability distribution over pairs $\tuple{P, T}$ 
given by drawing a problem $P$ from $\mathcal{P}(D)$, 
using a sound and complete planner to generate a plan for $P$, 
and setting $T$ to be the trajectory from following this plan.\footnote{%
The planner need not be deterministic.}


\begin{theorem}\label{sam-sample-thm}
Under the injective action binding assumption, given 
$m \geq \frac{1}{\epsilon} (2\ln 3\sum_{\substack{\liftf\in\mathcal{F}\\\lifta\in\mathcal{A}}}\prod_{t\in T}arity(\lifta,t)^{arity(\liftf,t)} + \ln \frac{1}{\delta})$
trajectories sampled from $\mathcal{T}_D$, with probability at least $1-\delta $ 
SAM learning for lifted domains (Algorithm~\ref{alg:sam}) returns a safe action model $M_\sam$ such that a problem drawn from $\mathcal{P}_D$ is not solvable with $M_\sam$ with probability at most $\epsilon$.
\end{theorem}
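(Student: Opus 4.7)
The plan is to mimic the PAC-style argument of~\cite{stern2017efficientAndSafe} for grounded domains, but with a lifted hypothesis space whose size is bounded in terms of parameter-bound fluents rather than grounded actions. For any candidate action model $M'$, define
\[
L(M') := \Pr_{(P,T) \sim \mathcal{T}_D}\bigl[T \text{ is inconsistent with } M' \text{ in the sense of Definition~\ref{def:consistent}}\bigr],
\]
and call $M'$ \emph{bad} if $L(M') > \epsilon$. I will show that, with probability at least $1 - \delta$ over the training sample, no bad $M'$ is consistent with all $m$ training trajectories. Since $M_\sam$ is always consistent with its training data and lies in the hypothesis space, this forces $L(M_\sam) \leq \epsilon$, and the stated bound on unsolvability follows from step (a) below.

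Step (a): if $L(M_\sam) \leq \epsilon$, then a fresh problem drawn from $\mathcal{P}_D$ is unsolvable under $M_\sam$ with probability at most $\epsilon$. Since $(P,T)$ is generated by executing a sound and complete planner on $\realm$, consistency of $T$ with $M_\sam$ means that every action along the plan is applicable under $M_\sam$; by safety (Theorem~\ref{safe-sam-thm}), the states produced under $M_\sam$ then coincide with those under $\realm$, so the plan remains valid under $M_\sam$ and reaches the goal, i.e., $P$ is solvable. Contrapositively, unsolvability implies inconsistency, so the unsolvability probability is bounded by $L(M_\sam)$. I also need that $M_\sam$ is always consistent with the $m$ training trajectories: its preconditions are retained only if satisfied in every observed pre-state (Rule~1); its effects are a subset of $\eff_{\realm}$ by safety, hence hold in every observed post-state; and the frame axioms hold because any literal that changed during a training action was added to the effects by Rule~3.

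Step (b): I bound the hypothesis space size. For each lifted action $\lifta$ and lifted fluent $\liftf$ there are $\prod_t arity(\lifta,t)^{arity(\liftf,t)}$ parameter bindings of $\liftf$'s parameters into those of $\lifta$, summing to the $N$ that appears in the theorem. For each such parameter-bound fluent, a candidate action model independently chooses (i) whether the positive form, the negative form, or neither is a precondition of $\lifta$, and (ii) the same three alternatives for effects, giving at most $9^N = e^{2N \ln 3}$ candidate models. Fixing any bad $M'$, the probability that a single draw from $\mathcal{T}_D$ is consistent with it is at most $1 - \epsilon$, so by independence of the $m$ trajectories the probability that $M'$ survives all of them is at most $(1-\epsilon)^m \leq e^{-\epsilon m}$. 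A union bound over the at most $9^N$ hypotheses gives a failure probability of at most $9^N e^{-\epsilon m}$; setting this $\leq \delta$ and solving yields $m \geq \frac{1}{\epsilon}\bigl(2 \ln 3 \cdot N + \ln \tfrac{1}{\delta}\bigr)$, which is exactly the bound claimed.

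The main obstacle is the coupling in step (a) between fresh-trajectory consistency with $M_\sam$ and actual solvability of the fresh problem: the implication relies crucially on Theorem~\ref{safe-sam-thm}, without which consistency of preconditions would not force the post-states under $M_\sam$ to agree with those under $\realm$, breaking the chain that lets the planner's plan carry over. The counting in step (b) is the other delicate point: one must describe preconditions and effects per parameter-bound fluent (three choices each), rather than per parameter-bound literal with independent yes/no bits, to land on the $9^N$ figure that produces the $2 \ln 3$ factor in the stated sample complexity.
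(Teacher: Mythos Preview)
Your proposal is correct and follows essentially the same route as the paper: a PAC-style union bound over the lifted hypothesis space of size $3^{2N}=9^N$, combined with the safety guarantee of Theorem~\ref{safe-sam-thm} to convert ``trajectory consistency'' into ``problem solvability.'' The paper packages the intermediate property as a separate notion it calls \emph{$\epsilon$-adequate} (defined only in terms of precondition satisfaction) and proves it as a standalone lemma before invoking safety, whereas you work directly with full consistency in the sense of Definition~\ref{def:consistent} and fold the argument that $M_\sam$ is consistent with its own training data into step~(a); structurally and quantitatively the two arguments are the same.
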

Theorem~\ref{sam-sample-thm} guarantees
that with high probability ($\geq 1-\delta$) SAM Learning returns an action model that will only fail to solve a given problem with low probability ($\leq \epsilon$), given a number of example trajectories linear in the size of the models.
For example, in the real action model of our simple logistics example with two binary fluents and three ternary actions, the load and unload actions have a single argument of each type; only the move action has two arguments of the same type (location). The only fluents that have location arguments are the at fluents, which have arity one with respect to locations. Thus, guaranteeing $\epsilon=\delta=5\%$ requires only 324 trajectories.
The rest of this section is devoted to establishing Theorem~\ref{sam-sample-thm}. 

\begin{definition}[Adequate]
An action model $M$ is {\em $\epsilon$-adequate} if, with probability at most $\epsilon$, a trajectory $T$ sampled from $\mathcal{T}_D$ contains an action triplet $\tuple{s,a,s'}$ where 
$s$ does not satisfy $\pre_M(a)$.\footnote{An action model may not contain any information about some action $a$. 
For the purpose of safe planning this is equivalent to an action model in which the precondition to $a$ can never be satisfied.}
\end{definition}

\begin{lemma}\label{adequate-lem}
The action model returned by SAM Learning (Algorithm~\ref{alg:sam}) 
given $m$ trajectories (as specified in Theorem \ref{sam-sample-thm})
is $\epsilon$-adequate with probability at least $1-\delta$. 
\end{lemma}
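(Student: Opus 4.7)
The plan is to invoke a realizable-case Occam (union) bound over a finite class of candidate lifted action models. I would set up the hypothesis class $H$ so that for each of the $N:=\sum_{\liftf,\lifta}\prod_{t}arity(\lifta,t)^{arity(\liftf,t)}$ parameter-bound fluents, a hypothesis independently chooses one of three precondition statuses (appears positively, appears negatively, or absent) and, independently, one of three effect statuses. This gives $|H|=9^{N}$ candidates, and $M_\sam\in H$ by construction.

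Next I would verify that $M_\sam$ is consistent with its input trajectories in the sense of Definition~\ref{def:consistent}. Condition~1 (preconditions hold in every pre-state) is immediate from line~\ref{line:remove-pre}, which removes any parameter-bound literal ever violated. For condition~2 (effects hold in every post-state), I would combine line~\ref{line:add-eff} with the inclusion $\eff(\lifta)\subseteq\eff_{\realm}(\lifta)$ established in the proof of Theorem~\ref{safe-sam-thm}: under the injective binding assumption, every parameter-bound literal that $M_\sam$ records as an effect of $\lifta$ is a true effect, so it must appear in every training post-state of every grounded instance of $\lifta$. Condition~3 (frame axioms) holds because line~\ref{line:add-eff} inserts into $\eff(\lifta)$ exactly the literals appearing in $s'\setminus s$ for any observed triplet.

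Now I apply the Occam bound. For a fixed $M\in H$ that is \emph{not} $\epsilon$-adequate, the probability that a trajectory drawn from $\mathcal{T}_D$ contains no precondition violation with respect to $M$ is, by definition, at most $1-\epsilon$. Consistency with a trajectory in the sense of Definition~\ref{def:consistent} implies, in particular, no precondition violations, so the probability that all $m$ independent training trajectories are consistent with $M$ is at most $(1-\epsilon)^m\leq e^{-\epsilon m}$. A union bound over $H$ then yields
\[
\Pr\!\left[\exists M\in H \text{ non-}\epsilon\text{-adequate and consistent with all } m \text{ trajectories}\right]\leq 9^{N} e^{-\epsilon m}.
\]
Setting the right-hand side at most $\delta$ and solving for $m$ recovers exactly $m\geq \tfrac{1}{\epsilon}(2N\ln 3+\ln(1/\delta))$, the hypothesis of Theorem~\ref{sam-sample-thm}. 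On this good event, since $M_\sam\in H$ is consistent with the training data, it cannot be non-$\epsilon$-adequate.

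I expect the delicate step to be the consistency verification rather than the union bound itself: precondition-consistency is direct from the code, but upgrading ``safety'' in the sense of Theorem~\ref{safe-sam-thm} to full consistency in the sense of Definition~\ref{def:consistent} requires that no parameter-bound literal recorded as an effect can fail to appear in a training post-state of the corresponding grounded action, which is where the injective binding assumption, via the unique inverse $(b_\lifta)^{-1}$, is genuinely used.
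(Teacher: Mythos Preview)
Your proof is correct and follows essentially the same route as the paper: a union bound over the finite class of lifted action models of size $3^{2N}=9^{N}$, combined with the observation that a non-$\epsilon$-adequate model survives $m$ i.i.d.\ trajectories with probability at most $(1-\epsilon)^m$. The paper phrases the key implication as ``$M_B$ can only be returned if [all $m$ samples are consistent with $M_B$]'' without spelling out why; you make this explicit by verifying that $M_\sam$ is consistent with its training trajectories, which is a welcome clarification. One minor simplification: since $\epsilon$-adequacy is defined solely in terms of precondition violations, you only need condition~1 of Definition~\ref{def:consistent} (which follows directly from line~\ref{line:remove-pre}); your verification of conditions~2 and~3 via Theorem~\ref{safe-sam-thm} and the injective-binding assumption is correct but not required for this lemma.
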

\begin{proof}
Consider any action model $M_B$ that may be returned by SAM Learning but is not $\epsilon$ adequate. 
By definition, the probability of drawing a trajectory from $\mathcal{T}_D$ that is inconsistent with $M_B$ is at least $\epsilon$. Thus, the probability of drawing $m$ samples 
that are consistent with $M_B$ is at most
\begin{equation}
    (1-\epsilon)^{m}\leq e^{-m\cdot\epsilon}.
\end{equation}
$M_B$ can only be returned if this occurs. For our choice of $m$,
\begin{equation}
e^{-m\cdot \epsilon }
\leq e^{-(\ln{3}L + \ln\frac{1}{\delta})} 
= \frac{\delta}{3^L}
\label{eq:singleBadModelProb}
\end{equation}
where 
\[
L=2\sum_{\substack{\liftf\in\mathcal{F}\\\lifta\in\mathcal{A}}}\prod_{t\in T}arity(\lifta,t)^{arity(\liftf,t)}
\]
Let $B$ be the set of action models that are not $\epsilon$-adequate. 
By a union bound over $B$, the probability that SAM Learning will return an action model that is not $\epsilon$-adequate 
is at most $\frac{|B|\delta}{3^L}$.
For each parameter-bound fluent, each precondition or effect will either contain that fluent, or its negation, or neither of them. Hence, the number of possible action models is $3^L$. 
Since $B$ is a set of action models, we have that the size of $B$ is at most $3^{L}$. 
Therefore, the probability that SAM Learning will return an action model that is not $\epsilon$-adequate is at most $\delta$. 
\end{proof}

\noindent Finally, we can prove Theorem~\ref{sam-sample-thm} as follows. 
\begin{proof}
Let $M$ be an action model returned by SAM Learning given $m$ samples. 
Thus, $M$ is a safe action model (Theorem~\ref{safe-sam-thm}) and it is $\epsilon$ adequate (Lemma~\ref{adequate-lem}). 
Consider a problem $P$ drawn from $\mathcal{P}(D)$, 
and its corresponding pair $\tuple{P, T}$ from $\mathcal{T}(D)$. 
Since $M$ is $\epsilon$-adequate, with probability at least $1-\epsilon$, 
for every action triplet $\tuple{s,a,s'}\in T$ 
$a$ is applicable in $s$, that is, $\pre_M(a)\subseteq s$. 
Since $M$ is a safe action model, we have that $a_M(s)=a_{\realm}(s)=s'$. 
Thus, with probability at least $1-\epsilon$ the trajectory $T$ is consistent with the learned action model $M$, and therefore $P$ can be solved with $M$ 
\end{proof}

\section{Multiple Action Bindings}
When the injective action-binding assumption does not hold, 
multiple action parameters are bound to the same object and thus $(b_\lifta)^{-1}$ is not defined. 
As a result, when SAM Learning infers an effect (Rule 3 in Observation~\ref{obs:sam-rules-lifted-general}) it cannot generalize it to be a unique effect of the corresponding lifted action, as done in line~\ref{line:infer-effect} in Algorithm~\ref{alg:sam}. 
This poses a challenge to learning a safe action model, as the information that can be inferred from observing action triplets can be complex.

For example,
consider a lifted action $\lifta(x,y)$.  
Suppose $x$ and $y$ are associated with the same type and $o$ is an object of that type. 
Given the action triplet 
$\tuple{\{~\}, \lifta(o, o), \{\liftl(o)\}}$, 
the agent can infer that $\liftl(o)$ is an effect of the grounded action $\lifta(o, o)$. 
However, the agent cannot accurately infer the effect of the lifted action $\lifta(x,y)$: it can be
 either $\{\liftl(x)\}$, $\{\liftl(y)\}$, or both. 
 Concretely, if $o_1$ and $o_2$ are two different objects from the same type as $o$, 
 the agent cannot determine if applying $\lifta(o_1,o_2)$ will result in a state with
 $\{\liftl(o_1)\}$, $\{\liftl(o_2)\}$, or $\{\liftl(o_1), \liftl(o_2)\}$.
Consequently, any safe action model must not enable groundings of $\lifta$ that bind $x$ and $y$ to different objects, unless $L(x)$ and $L(y)$ both already hold. 

Now, assume the agent is also given the action triplet 
$\tuple{\{\liftl(o_1)\}, \lifta(o_1, o_2), \{\liftl(o_1)\}}$. 
The pre- and post-state are the same, so in Algorithm~\ref{alg:sam} we cannot learn any new effects of $\lifta$ from this triplet. However, we can infer that $\liftl(o_2)$ is not an effect of the grounded action in this triplet. Consequently, the parameter-bound literal $\liftl(y)$ cannot be an effect of the lifted action $\lifta$. 
Thus, this second action triplet does provide useful information: it allow us to infer that 
the lifted action $\lifta(x,y)$ has a parameter-bound effect $\liftl(x)$. 

In a planning task, we might avoid the above by reformulating the domain to satisfy the injective action binding assumption. However, in a learning setup, we do not have control over how the domain is formulated and so the domain we are learning may indeed violate the injective action binding assumption, preventing the application of SAM learning. Next, we describe Extended SAM Learning, which addresses such cases by capturing the form of inference described above.

\subsection{Extended SAM Learning}
Extended SAM (E-SAM) learning works in two stages. First, it creates 
for every lifted action $\lifta$ a conjunction and a Conjunctive Normal Form (CNF) formula, denoted $\conj_\pre(\lifta)$ and $\cnf_\eff(\lifta)$, that describe a set of constraints for a safe action model. 
Then E-SAM learning generates a safe action model based on these formulas.

\subsubsection{Safe Action Model Constraints}
$\conj_\pre(\lifta)$ uses atoms of the form 
\ispre($\tuple{\liftl,b_{\liftl,\lifta}}$), which specify that 
$\tuple{\liftl,b_{\liftl,\lifta}}$ is a precondition $\liftl$ in a safe action model. 
Similarly, $\cnf_\eff(\lifta)$ uses atoms of the form 
\iseff($\tuple{\liftl,b_{\liftl,\lifta}}$), 
which specify that $\tuple{\liftl,b_{\liftl,\lifta}}$ is an effect of $\liftl$ in a safe action model. 

Initially, $\conj_\pre(\lifta)$ and $\cnf_\eff(\lifta)$ 
represent that all possible parameter-bound literals are preconditions and there are no effects. 
Then, E-SAM learning iterates over every action triplet $(s, a, s')$ in the given set of trajectories in which $a$ is a grounding of $\lifta$. 
For every such triplet, it applies the inference rules in Observation~\ref{obs:sam-rules-lifted-general} as follows. 

Every parameter-bound literal $\tuple{\liftl,b_{\liftl, \lifta}}$ 
such that $\tuple{\liftl, b_\lifta\circ b_{\liftl,\lifta}}$ is 
not in the pre-state cannot be a pre-condition (Rule 1). So, we remove \ispre$(\tuple{\liftl,b_{\liftl, \lifta}})$ from $\conj_\pre$ for such parameter-bound literals.  
Similarly, every parameter-bound literal $\tuple{\liftl,b_{\liftl, \lifta}}$ 
such that $\tuple{\liftl, b_\lifta\circ b_{\liftl,\lifta}}$ is 
not in the post-state cannot be an effect (Rule 2). So, we add $\neg$\iseff$(\tuple{\liftl,b_{\liftl, \lifta}})$ to $\cnf_\eff$ for such parameter-bound literals.  
Finally, every grounded literal $\tuple{\liftl, b_\liftl}$ in $s'\setminus s$ must be an effect. 
So, we add to $\cnf_\eff$ the disjunction over 
all parameter-bound literals
$\tuple{\liftl, b_\lifta\circ b_{\liftl,\lifta}}$ that satisfy $\tuple{\liftl, b_\lifta\circ b_{\liftl,\lifta}}=\tuple{\liftl, b_\liftl}$ (Rule 3).
Once the given trajectories have been processed by the algorithm, 
we simplify the CNF by applying unit propagation and removing subsumed clauses. 



\subsubsection{Proxy Actions}
The main challenge in creating a safe action model from the generated formulas is the disjunction in $\cnf_\eff$, which represents uncertainty w.r.t to the effects of action. 
To address this, we create a safe action model with a set of proxy actions that ensure every action is only applicable when we know its effects. We achieve this by computing, for each possible subset of the parameter-bound literals in the formulas for a given action, most general unifiers for the literals; in our setting, such unifiers simply identify subsets of the action parameters. Alternatively, if the parameter-bound literal appears in the precondition, then the literal does not need to be included in the unifier, and we know that the corresponding effect will always hold. Hence, since at least one of the unified parameters occurs in the parameter binding of the effect in the true action model, so when the parameters in the set are all bound to the same object (or the literal appears in the precondition), we can guarantee that the corresponding effect literal holds in the post-state.  In more detail, this is done as follows. 

If an action has only unit clauses, we have a single action with the effects indicated by the positive literals. Otherwise, we create a proxy action for all subsets of the parameter-bound literals in non-subsumed non-unit clauses. (The number of proxy actions is thus exponential in the size of the formula of non-unit clauses.) In this proxy action, we identify all of the parameters that appear in the same position of the literals in the subset with the same fluents. Each proxy action has the following set of preconditions and effects: every unit clause in the CNF and every clause in the corresponding subset specifies an effect of the proxy action. For the subset of literals not chosen for this proxy action, the proxy action has the corresponding literals as additional preconditions, in addition to the preconditions of the original SAM Learning action model. 
Every plan generated by the action model created by the resulting action model is translated to a plan without proxy actions by replacing them with the actions for which they were created. 
Algorithm~\ref{alg:esafeplan} and~\ref{alg:extract-clauses} lists the complete pseudocode of E-SAM learning.

\begin{algorithm}[t]
\footnotesize
\DontPrintSemicolon
\SetKwInOut{Input}{Input}\SetKwInOut{Output}{Output}
\Input{$\Pi_\mathcal{T} = \tuple{T,O, s_I, G, \mathcal{T}}$}
\Output{$(\pre, \eff)$ for a safe action model}
\BlankLine
    $\mathcal{A'}\gets$ all lifted actions observed in $\mathcal{T}$\\
    \ForEach{lifted action $\lifta\in \mathcal{A'}$}{
        $(\conj_\pre, \cnf_\eff)\gets$ ExtractClauses($\lifta$, $\mathcal{T}(\lifta)$) \nllabel{line:esam:extract-clauses}\\
        $\cnf^1_\eff\gets$ all unit clauses in $\cnf_\eff$\\
        SurelyEff$\gets \{ l ~|~ \text{IsEff}(l)\in \cnf^1_\eff\}$\\
        SurelyPre$\gets \{ l ~|~ \text{IsPre}(l)\in \conj_\pre \}$\\ 
        \tcc{Create proxy actions for non-unit effects clauses}
        $\cnf_\eff\gets \cnf_\eff\setminus \cnf^1_\eff$ \\
        \ForEach{$S\in$ Powerset($\cnf_\eff$)}{
            $\pre(\lifta_S)\gets $ SurelyPre; $\eff(\lifta_S)\gets $ SurelyEff \\
            \ForEach{$C_\eff\in \cnf_\eff\setminus S$}{
                \ForEach{$\text{IsEff}(l)\in C_\eff$}{
                    Add $l$ to $\pre(\lifta_S)$
                }
            }
            MergeObjects$\big(S, \pre(\lifta_S),\eff(\lifta_S)\big)$
        }
    } 
    \Return $(\pre, \eff)$ 

\caption{Extended SAM Learning}\label{alg:esafeplan}
\end{algorithm}

\begin{algorithm}
\DontPrintSemicolon
\SetKwInOut{Input}{Input}\SetKwInOut{Output}{Output}
\Input{$\lifta$, a lifted action}
\Input{$\mathcal{T}(\lifta)$, action triplets that contain $\lifta$}
\Output{$(\conj_\pre, \cnf_\eff)$, representing the constraints over $\pre(\lifta)$ and $\eff(\lifta)$}
\BlankLine
    $\cnf_\eff \gets\emptyset$; $\conj_\pre\gets \emptyset$ \nllabel{line:init}\\ 
    \ForEach{parameter-bound literal $\tuple{L,b_{\liftl,\lifta}}$}{
        Add \ispre($\tuple{L,b_{\liftl,\lifta}}$) to $\conj_\pre$
    }
    
    \ForEach{$(s, \tuple{\lifta,b_\lifta}, s')\in \mathcal{T}(\lifta)$}{
        \ForEach{\ispre($\tuple{L,b_{\liftl,\lifta}}\in \conj_\pre$)}{
            \If{$\tuple{L, b_\lifta\circ b_{\liftl,\lifta}}\notin s$}{
                Remove \ispre($\tuple{L,b_{\liftl,\lifta}}$) from $\conj_\pre$ \nllabel{line:rule1}
            }
        }
    
        \ForEach{$\tuple{L,b_\liftl}\in s'\setminus s$}{
            $C_\eff\gets\bot$\\
            \ForEach{$b_{\liftl,\lifta}\in\bindings(b_\lifta, b_\liftl)$}{
                $C_\eff \gets C_\eff \vee$ \iseff($\tuple{\liftl, b_{\liftl,\lifta}})$
            }
            Add EffectsClause to $\cnf_\eff$ \nllabel{line:rule2}
        }
        \ForEach{parameter bound literal $\tuple{L,b_{\liftl,\lifta}}$}{
            $b_\liftl\gets \tuple{L,b_{\liftl,\lifta}\circ b_\lifta}$\\
            \If{$\tuple{L,b_\liftl}\notin s'$}{
                Add $\neg$\iseff($\tuple{L,b_{\liftl,\lifta}}$) to $\cnf_\eff$ \nllabel{line:rule3}
            }
        }
    } 
    Minimize($\cnf_\eff$)~\nllabel{line:minimize}\\
    \Return $(\conj_\pre, \cnf_\eff)$

\caption{ExtractClauses}
\label{alg:extract-clauses}
\end{algorithm}

\subsubsection{Theoretical Properties}
E-SAM Learning creates an action model that satisfies the same properties as the action model created by SAM learning under the injective action binding assumption, as captured in Theorems~\ref{safe-sam-thm} and~\ref{thm:sam-learning-complete-grounded}. 

\begin{theorem}\label{extended-sam-safe}
The E-SAM Learning action model is safe.
\end{theorem}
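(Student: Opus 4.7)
The goal is to show that the action model returned by Algorithm~\ref{alg:esafeplan}---which consists of proxy lifted actions $\lifta_S$ with their preconditions and effects---is safe with respect to $\realm$: for every grounded proxy action $\tuple{\lifta_S, b_{\lifta_S}}$ and every state $s$ with $\pre(\tuple{\lifta_S, b_{\lifta_S}}) \subseteq s$, the ``expanded'' grounded action $\tuple{\lifta, b_\lifta}$ (obtained by inverting MergeObjects' parameter identifications) satisfies Eq.~\ref{eq:safe_action_model} at $s$. I would follow the structure of the proof of Theorem~\ref{safe-sam-thm}, but adapt the induction to handle the disjunctive clauses in $\cnf_\eff$ and the proxy-action construction.

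First I would establish two invariants maintained by ExtractClauses (Algorithm~\ref{alg:extract-clauses}): (a) $\pre_\realm(\lifta)$, as a set of parameter-bound literals, is always a subset of SurelyPre, by direct application of Rule~1 of Observation~\ref{obs:sam-rules-lifted-general} at line~\ref{line:rule1}; and (b) the truth assignment ``$\iseff(l)$ is true iff $l \in \eff_\realm(\lifta)$'' satisfies every clause added to $\cnf_\eff$ (the disjunctions from Rule~3 at line~\ref{line:rule2} and the negative unit clauses from Rule~2 at line~\ref{line:rule3}). Since Minimize performs only sound simplifications (unit propagation and subsumption removal), invariant (b) is preserved: every positive unit clause it outputs corresponds to a true effect (so SurelyEff $\subseteq \eff_\realm(\lifta)$) and every negative unit clause correctly excludes a non-effect. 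Safety of $\lifta_S$ then decomposes into applicability---$\pre(\lifta_S) \supseteq$ SurelyPre, whose grounding contains $\pre_\realm(\tuple{\lifta, b_\lifta})$ by (a)---and successor-state equality, where every real effect of $\lifta$ is either a unit clause (captured by SurelyEff), or lies in a non-unit clause $C_\eff \notin S$ whose disjuncts were all added as preconditions of $\lifta_S$ (so the real effect already holds in $s$ and is preserved by the frame axiom in both models), or lies in some $C_\eff \in S$ where MergeObjects has identified the parameters of $\lifta_S$ so that every disjunct of $C_\eff$ resolves to the same grounded literal after expansion to $b_\lifta$---making whichever disjunct is the real effect coincide with the literal added to $\eff(\lifta_S)$.

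The main obstacle is formalizing MergeObjects: I would need to argue that it computes a most general unifier (in the OI-subsumption sense used earlier in the paper) of the disjuncts of each clause in $S$, so that expanding any proxy grounding $b_{\lifta_S}$ back to $b_\lifta$ yields a binding under which each $C_\eff \in S$ is witnessed uniformly. A secondary subtlety is the symmetric treatment of positive and negative parameter-bound literals through $\iseff$: I must explicitly verify that the frame-axiom argument above carries over to negative effects (a real deletion's literal being ``already true'' in $s$ means the underlying fluent is already false), which in turn relies on invariant (b) to forbid any silently-missed deletion by ruling out a falsifying truth assignment on $\cnf_\eff$.
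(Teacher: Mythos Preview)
Your proposal is correct and takes essentially the same approach as the paper: the paper's (much terser) proof argues that proxy preconditions dominate the true preconditions, that each proxy effect corresponds to a true effect under the parameter identification performed by MergeObjects, and that the literals of non-unit clauses outside $S$ are forced into the pre-state via the added preconditions, then defers the residual case of an unobserved real effect (always true in the pre-state, hence still in SurelyPre) to the argument of Theorem~\ref{safe-sam-thm}. Your invariants (a) and (b) and your three-way case split make explicit exactly what the paper leaves implicit; just be sure your final write-up includes that fourth ``always-in-pre-state'' case, which your current enumeration omits but your reference to Theorem~\ref{safe-sam-thm} would recover.
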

\begin{proof}
For each of the proxy actions, for every effect, at least one of the parameter-bound literals for the identified parameters is an effect of the true action. Furthermore,  the preconditions ensure that the rest of the uncertain effects are already present in the pre-state. The post-state of the proxy action is thus identical to that of the true action when its precondition is satisfied. Likewise, the proxy actions have preconditions that are only stronger than the actual precondition. Eq.~\ref{eq:safe_action_model} therefore holds. The rest of the claim now follows from the argument in Theorem~\ref{safe-sam-thm}.
\end{proof}

Recall that a \emph{prime implicate} is a clause that is entailed by a formula for which no subclause is also entailed. 
$\cnf_\eff$ consists of precisely these prime implicates.
\begin{lemma}\label{unit-prop-complete-lem}
All prime implicates of $\cnf_\eff$ 
are derived by unit propagation.
\end{lemma}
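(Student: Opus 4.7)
The plan is to exploit the very restricted syntactic form of $\cnf_\eff$. Inspecting Algorithm~\ref{alg:extract-clauses}, every clause added at line~\ref{line:rule3} is a \emph{negative unit} $\neg\iseff(\cdot)$, while every clause added at line~\ref{line:rule2} is a \emph{purely positive} disjunction of $\iseff(\cdot)$ atoms. No clause mixes polarities, and no negative clause has more than one literal. This ``almost-Horn'' shape is what ultimately makes unit propagation sufficient.

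Given this structure, I would first characterize everything that resolution can derive from $\cnf_\eff$. Two negative units cannot resolve with each other, nor can two purely positive clauses, since in each case no complementary pair exists. The only productive resolution step combines a negative unit $\neg\iseff(\ell)$ with a positive clause containing $\iseff(\ell)$, producing a shorter positive clause. A straightforward induction on derivation depth then shows that every clause derivable from $\cnf_\eff$ by resolution is either (i) one of the original negative units, or (ii) a positive subclause of some original positive clause $P$ obtained by deleting a subset of its literals whose negations occur as negative unit clauses.

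Next I would invoke the resolution-based characterization of prime implicates: a prime implicate is a minimal clause entailed by the formula, and by resolution completeness every implicate subsumes some clause derivable from the formula by resolution. Combining this with the classification above, every negative prime implicate must be one of the original negative units (these are the only derivable negative clauses, and none is properly subsumed because the only strict subclause would be $\bot$, which is not entailed since $\cnf_\eff$ was built from consistent real trajectories). Every positive prime implicate must be a subclause of some original positive clause $P$ from which \emph{every} literal whose negation is a unit has been deleted — any non-maximal deletion is properly subsumed by the maximal one and hence is not prime. But this maximal deletion is exactly the clause that one pass of unit propagation of the negative units against $P$ produces. The subsumption pruning folded into \textsc{Minimize} then discards any such clause that is subsumed by another surviving clause, leaving precisely the positive prime implicates.

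The main obstacle I anticipate is handling a couple of boundary cases cleanly, rather than any deep combinatorial difficulty. First, Rule~3 can emit a \emph{positive} unit when $\bindings(b_\lifta,b_\liftl)$ is a singleton; this is just the $|P|=1$ specialization of the positive-clause analysis and is absorbed by unit propagation in the usual way. Second, unit propagation on a positive clause could in principle yield $\bot$, signalling inconsistency, but this is precluded because $\cnf_\eff$ is generated from genuine action triplets of an existing (if unknown) action model, so the formula has at least one satisfying assignment. With these sanity checks recorded, the structural argument above yields the lemma.
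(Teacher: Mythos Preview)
Your proposal is correct and follows essentially the same approach as the paper's proof: both observe that $\cnf_\eff$ consists only of purely positive clauses (from Rule~3) and negative unit clauses (from Rule~2), so that every nontrivial resolution step is a unit resolution, and then invoke completeness of resolution for prime implicates. Your version is considerably more detailed---you spell out the induction on derivation depth, classify the derivable clauses explicitly, and treat the positive-unit and $\bot$ boundary cases---whereas the paper compresses the argument into two sentences, but the underlying idea is identical.
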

\begin{proof}
Note that the clauses created by Rule 3 contain only positive literals, and negative literals are only created by Rule 1 and 2, which create unit clauses. Hence, unit propagation is sufficient to capture all possible resolution inferences from these clauses. 
By the completeness of resolution for prime implicates (e.g., \cite[Ch.\ 13, Exercise 1]{brachman2004}),
all of the prime implicates of $\cnf_\eff$ 
can be derived by resolution. In turn, therefore, unit propagation can also derive all of the prime implicates of $\cnf_\eff$.
\end{proof}

\begin{theorem}\label{e-sam-strong-theorem}
Every action model $M'$ that is consistent with $\mathcal{T}$ 
and safe w.r.t.\ the real action model \realm is also safe with respect to the extended SAM Learning action model.
\end{theorem}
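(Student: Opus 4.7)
The plan is to show, in the spirit of Theorem~\ref{thm:sam-learning-complete-grounded}, that every grounded action $a = \tuple{\lifta, b_\lifta}$ applicable in some state $s$ under $M'$ can be matched by some grounded proxy action $\lifta_S$ in the E-SAM model that is applicable in $s$ and produces the same post-state. Since $M'$ is safe w.r.t.\ $\realm$ we already have $a_{M'}(s) = a_\realm(s)$, so matching this outcome suffices to establish both clauses of Definition~\ref{def:safe_action_model} against the E-SAM model (whose effective applicability condition for a lifted action is that \emph{some} grounded proxy action for it is applicable).

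First I would translate $M'$'s consistency with $\mathcal{T}$ into structural constraints relative to the formulas extracted by Algorithm~\ref{alg:extract-clauses}: (i) $\pre_{M'}(\lifta)$ is contained in the parameter-bound literals represented by $\conj_\pre(\lifta)$, because any precondition of $M'$ must hold in every observed pre-state; and (ii) $\eff_{M'}(\lifta)$ is a satisfying assignment of $\cnf_\eff(\lifta)$: negative unit clauses rule out effects, and each disjunctive clause must contain at least one disjunct that $M'$ adopts as an effect (otherwise the frame axiom is violated on the observation that originally generated the clause). Lemma~\ref{unit-prop-complete-lem} ensures that reasoning against $\cnf_\eff(\lifta)$'s prime implicates captures all implied constraints.

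Next I would select the proxy action matching $a$: let $S$ be the set of non-unit clauses of $\cnf_\eff(\lifta)$ whose disjuncts collapse to a single grounded literal under $b_\lifta$, i.e., where $b_\lifta$ induces exactly the parameter identifications that \texttt{MergeObjects} performs. For such clauses, the merged effect of $\lifta_S$ grounds to the same literal as the real effect of $a$. For clauses outside $S$, the disjuncts ground to distinct literals under $b_\lifta$, and I would argue via $M'$'s safety and the frame axiom that all of these groundings must already be present in $s$: only one of them is a real effect, and for the post-state of $a$ under $M'$ and $\realm$ to agree, any disjunct $M'$ does not declare to be an effect must either remain unchanged (hence its grounded literal stays in $s$) or coincide with a declared effect. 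Thus the extra preconditions that $\lifta_S$ imposes for clauses outside $S$ are satisfied in $s$. SurelyPre holds in $s$ by a similar argument: any parameter-bound literal of SurelyPre missing from $s$ under $b_\lifta$ would force $M'$'s grounded trace to diverge from $\realm$'s on a previously observed triplet, contradicting safety.

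The main obstacle is the parameter-matching claim: showing that whenever $M'$'s selected disjunct and the real effect of $\realm$ ground to the same literal under $b_\lifta$, the identifications $b_\lifta$ performs match exactly those prescribed by \texttt{MergeObjects} on that clause. This demands tracing parameters through the composition of $b_\lifta$ with the candidate parameter bindings in $\bindings(b_{\lifta_0}, b_\liftl)$ at the originating observation. Once established, the grounded proxy action $\lifta_S$ reproduces $a_{M'}(s) = a_\realm(s)$ exactly, completing the argument.
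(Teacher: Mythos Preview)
Your overall decomposition---pick a proxy action $\lifta_S$ for each step $M'$ allows and verify its preconditions and effects---matches the paper's. However, your justification for why the extra preconditions of $\lifta_S$ hold in $s$ does not go through.

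For a clause $C$ outside your $S$ you argue: ``only one of them is a real effect, and for the post-state of $a$ under $M'$ and $\realm$ to agree, any disjunct $M'$ does not declare to be an effect must either remain unchanged (hence its grounded literal stays in $s$) or coincide with a declared effect.'' But this reasoning, carried out against a \emph{fixed} $\realm$, does not force the grounding of the disjunct that \emph{is} $\realm$'s effect to lie in $s$: that grounding can perfectly well appear only in $s'\setminus s$, and then the corresponding precondition of $\lifta_S$ fails. Likewise, your argument for SurelyPre (``would force $M'$'s grounded trace to diverge from $\realm$'s on a previously observed triplet'') conflates consistency with $\mathcal{T}$ and safety w.r.t.\ $\realm$; neither forces a SurelyPre literal to hold in the \emph{new} state $s$ if $\realm$ happens not to have it as a precondition.

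The missing idea is the one the paper extracts from Lemma~\ref{unit-prop-complete-lem}: because every non-unit clause in $\cnf_\eff$ is a prime implicate, for \emph{each} disjunct there is a model consistent with $\mathcal{T}$ in which that disjunct is the \emph{only} satisfied literal of the clause---and any of these models ``may be the real model'' (the same move used in the proof of Theorem~\ref{thm:sam-learning-complete-grounded}). Since $M'$ must be safe whichever of these is $\realm$, the post-state $a_{M'}(s)$ must agree with all of them simultaneously; when the disjuncts of $C$ ground to distinct literals under $b_\lifta$, this forces every such grounded literal to already lie in $s$. You cite Lemma~\ref{unit-prop-complete-lem}, but only to say it ``captures all implied constraints''; you never deploy the each-disjunct-can-stand-alone consequence that actually drives the argument. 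The analogous use handles SurelyPre: each literal there is a precondition in some consistent model, so safety forces it into $s$. Finally, the ``main obstacle'' you flag is not one: by your definition of $S$, $b_\lifta$ already induces exactly the identifications \texttt{MergeObjects} performs on each clause in $S$, so compatibility is automatic.
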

\begin{proof}
Let $M'$ be an action model that is consistent with $\mathcal{T}$ and safe w.r.t.\ \realm, and let $\tuple{s,\tuple{\lifta,b},s'}$ be an action triplet permitted by $M'$. 
Consider the set $S$ of literals in $s'\setminus s$ that do not correspond to unit clauses in the CNF created by E-SAM Learning, and the set $\bar{S}$ of literals that are the groundings under $b$ of the effects in the non-unit clauses created by E-SAM learning that are not in $s'\setminus s$. 
Recall, Observation~\ref{obs:sam-rules-lifted-general} characterizes the set action models consistent with $\mathcal{T}$ and by Lemma~\ref{unit-prop-complete-lem}, no sub-clause of the CNF created by E-SAM learning is entailed by the rules of Observation~\ref{obs:sam-rules-lifted-general}. 
Therefore, for every literal of every non-unit clause of this CNF, there exists an action model consistent with $\mathcal{T}$ in which that literal is the only satisfied literal of the clause. (Otherwise, a strictly smaller clause would be entailed.) Therefore, for each literal $l\in S$, since $M'$ is safe w.r.t.\ \realm, all of the parameters of $\lifta$ in some clause for this effect must be bound to the objects necessary to obtain $l$ as the corresponding effect. Thus, $b$ must be consistent with at least one of the proxy actions $\lifta_{proxy}$. Furthermore, since the literals in $\bar{S}$ may be effects of $\tuple{\lifta, b}$, if they are not in $s'\setminus s$, they must be in $s$, so the preconditions of $\lifta_{proxy}$ are satisfied as well. Since the E-SAM Learning action model is safe by Theorem~5, the post-state of $\lifta_{proxy}$ is therefore equal to that obtained by the true action model, which is in turn also equal to $s'$ since $M'$ is also safe. $\lifta$ is therefore an application of $\lifta_{proxy}$, and we see that the use of $\lifta$ in $M'$ is safe with respect to the set of proxy actions in the action model created by E-SAM Learning.
\end{proof}

\paragraph{Time complexity}
E-SAM learning can be split into two parts: a \textbf{learning} part, which extracts clauses about the preconditions and effects of the actions (line~\ref{line:esam:extract-clauses} in Alg.~\ref{alg:esafeplan}), and a \textbf{compilation} part that generates a PDDL encoding that can be used by off-the-shelf PDDL planners. The latter involves the creation of the proxy actions. 
The learning part of E-SAM learning can be implemented to run in polynomial time in the number of parameter-bound literals and total number of action triplets, similar to SAM learning. 
The compilation part, however, may run in exponential time due to the 
 inability of PDDL to capture the uncertainty over actions' effects that has been learned (captured by $\cnf_\eff$). 
 Future work may investigate avoiding this exponential step by instead compiling the learned knowledge to a domain encoding for a conformant planner~\cite{bonet2010conformant}.

\section{Experiments}
Next, we perform an experimental evaluation of SAM Learning over planning problems 
from twelve domains from the IPC~\cite{ipc}. 
Table~\ref{tab:domains} lists the names of these domain, the number of lifted fluents and actions, the largest arity of these lifted fluents and actions, and the largest number of grounded fluents and actions in our dataset. 
We have chosen only domains in the IPC benchmarks in which the injective action binding assumption holds. 
For such domains, E-SAM Learning and SAM Learning behave the same. 

\begin{table}[t]
\centering
\resizebox{\columnwidth}{!}{
\begin{tabular}{lccccrr}
\toprule
            &               \#&             \# &        max &       max &       max &       max \\
            & lifted  & lifted & arity & arity & ground & ground\\
            &fluents&actions&fluents&actions & fluents&actions \\ 
\midrule

Blocks      & 5                 & 4                 &2                  &2  & 182 & 182                  \\
Depot      & 6                 & 5                 &4                  &2  & 75 & 450                  \\
Ferry       & 5                 & 3                 &2                  &2& 75 & 75                    \\
Floortile   & 10                & 7                 &2                  &4 & 40 & 64                   \\
Gripper     & 4                 & 3                 &2                  &3& 42 & 84                    \\
Hanoi       & 3                 & 1                 &2                  &3& 33 & 166                    \\
Npuzzle     & 3                 & 1                 &2                  &3& 80 & 80                    \\
Parking     & 5                 & 4                 &2                  &3  & 182 & 2,184                  \\
Satellite      & 8                 & 5                 &2                  &4  & 75 & 1875                  \\
Sokoban     & 4                 & 2                 &3                  &5& 288 & 564                    \\
Spanner     & 6                 & 3                 &2                  &4  & 12 & 12                  \\
Transport      & 5                 & 3                 &2                  &5  & 870 &  3600                \\

\bottomrule
\end{tabular}
}
\caption{Statistics on the domains in our experiments.}
\label{tab:domains}
\end{table}

\begin{table}[bt!] 
\centering
\resizebox{0.99\columnwidth}{!}{
\begin{tabular}{ll|cc|cc}
\toprule
                                &                                   & \multicolumn{2}{c}{Trajectories}                   & \multicolumn{2}{c}{Triplets}                       \\ 
Domain                          & \# Objects                        & \multicolumn{1}{c}{SAM}& \multicolumn{1}{c}{FAMA} & \multicolumn{1}{c}{SAM}& \multicolumn{1}{c}{FAMA} \\ \midrule
\multirow{8}{*}{Blocks} & 7 blocks                                & 1                       & 2                        & 13                      & 22                       \\
                                & 8 blocks                                & 1                       & 2                        & 16                      & 29                       \\
                                & 9 blocks                                & 1                       & 2                        & 18                      & 35                       \\
                                & 10 blocks                                & 1                       & 2                        & 22                      & 40                       \\
                                & 11 blocks                                & 1                       & 2                        & 25                      & 46                       \\
                                & 12 blocks                                & 1                       & 2                        & 28                      & 53                       \\
                                & 13 blocks                                & 1                       & 2                        & 35                      & 60                       \\
                                & 14 blocks                                & 1                       & 2                        & 42                      & 72                       \\ \midrule
\multirow{4}{*}{Depot}       & 1 truck, 2 places, 4 hoists, 10 crates               & 1                       & 1                        & 18                       & 24                        \\
                                & 1 truck, 2 places, 4 hoists, 15 crates              & 1                       & 1                        & 26                       & 32                       \\
                                & 2 trucks, 3 places, 5 hoists, 10 crates                & 1                       & 1                        & 22                      & 28                       \\
                                & 2 trucks, 3 places, 5 hoists, 15 crates              & 1                       & 1                        & 28                      & 36                       \\ \midrule
\multirow{4}{*}{Ferry}       & 2 locations, 8 cars               & 1                       & 1                        & 4                       & 7                        \\
                                & 3 locations, 10 cars              & 1                       & 1                        & 9                       & 12                       \\
                                & 4 locations, 12 cars              & 1                       & 1                        & 12                      & 15                       \\
                                & 5 locations, 15 cars              & 1                       & 1                        & 14                      & 17                       \\ \midrule
\multirow{4}{*}{Floortile}   & 3x3, 2 robots                     & 1                       & 2                        & 13                      & 22                       \\
                                & 4x3, 2 robots                     & 1                       & 2                        & 13                      & 32                       \\
                                & 4x4, 2 robots                     & 1                       & 2                        & 16                      & 40                       \\
                                & 5x4, 2 robots                     & 1                       & 2                        & 16                      & 52                       \\ \midrule
\multirow{4}{*}{Gripper}        & 2 rooms, 6 balls                  & 1                       & 1                        & 4                       & 8                        \\
                                & 2 rooms, 10 balls                 & 1                       & 1                        & 5                       & 8                        \\
                                & 3 rooms, 8 balls                  & 1                       & 1                        & 5                       & 8                        \\ 
                                & 3 rooms, 14 balls                 & 1                       & 1                        & 5                       & 9                        \\ \midrule
\multirow{4}{*}{Hanoi}       & 3 disks                           & 1                       & 1                        & 3                       & 3                        \\
                                & 4 disks                           & 1                       & 1                        & 3                       & 3                        \\
                                & 5 disks                           & 1                       & 1                        & 3                       & 3                        \\
                                & 6 disks                           & 1                       & 1                        & 3                       & 3                        \\ \midrule
\multirow{3}{*}{Npuzzle}     & 8 tiles                                 & 1                       & 1                        & 1                       & 1                        \\
                                & 15 tiles                                & 1                       & 1                        & 1                       & 1                        \\
                                & 24 tiles                                & 1                       & 1                        & 1                       & 1                        \\ \midrule
\multirow{4}{*}{Parking}     & 3 curbs, 4 cars                   & 2                       & 3                        & 13                      & 20                       \\
                                & 5 curbs, 8 cars                   & 2                       & 4                        & 32                      & 52                       \\
                                & 7 curbs, 12 cars                  & 2                       & 4                        & 53                      & 87                       \\
                                & 8 curbs, 14 cars                  & 2                       & 3                        & 72                      & 98                       \\ \midrule
\multirow{4}{*}{Satellite}     & 2 sats., 4 instrs., 4 modes, 8 dirs.                      & 1                       & 1                        & 20                       & 28                        \\
                                & 4 sats., 4 instrs., 4 modes, 8 dirs.                        & 1                       & 1                        & 20                       & 28                        \\
                                & 5 sats., 5 instrs., 5 modes, 10 dirs.                        & 1                       & 1                        & 24                       & 32                        \\
                                & 5 sats., 5 instrs., 5 modes, 15 dirs.                      & 1                       & 1                        & 26                       & 34                       \\ \midrule
\multirow{4}{*}{Sokoban}     & 5x5, 2 boxes                      & 1                       & 1                        & 4                       & 6                        \\
                                & 7x7, 2 boxes                      & 1                       & 1                        & 6                       & 8                        \\
                                & 8x8, 3 boxes                      & 1                       & 1                        & 5                       & 7                        \\
                                & 9x9, 3 boxes                      & 1                       & 1                        & 8                       & 10                       \\ \midrule
\multirow{9}{*}{Spanner}     & 10 spanners, 10 nuts, 2 locations & 1                       & 1                        & 14                      & 16                       \\
                                & 10 spanners, 10 nuts, 4 locations & 1                       & 1                        & 16                      & 18                       \\
                                & 10 spanners, 10 nuts, 6 locations & 1                       & 1                        & 18                      & 20                       \\
                                & 11 spanners, 11 nuts, 2 locations & 1                       & 1                        & 15                      & 17                       \\
                                & 11 spanners, 11 nuts, 4 locations & 1                       & 1                        & 17                      & 19                       \\
                                & 11 spanners, 11 nuts, 6 locations & 1                       & 1                        & 19                      & 21                       \\
                                & 12 spanners, 12 nuts, 2 locations & 1                       & 1                        & 16                      & 18                       \\
                                & 12 spanners, 12 nuts, 4 locations & 1                       & 1                        & 18                      & 20                       \\
                                & 12 spanners, 12 nuts, 6 locations & 1                       & 1                        & 20                      & 22                       \\ \midrule
\multirow{4}{*}{Transport}     & 2 trucks, 5 packages, 10 locations                      & 1                       & 1                        & 16                       & 20                        \\
                                & 2 trucks, 10 packages, 20 locations                     & 1                       & 1                        & 18                       & 22                        \\
                                & 4 trucks, 10 packages, 20 locations                      & 1                       & 1                        & 22                       & 26                        \\
                                & 4 trucks, 15 packages, 30 locations                    & 1                       & 1                        & 24                       & 30                       \\ \bottomrule
\end{tabular}
}
\caption{Number of trajectories and action triplets needed to learn the real action model in each domain.} 
\label{tab:all_experiments}
\end{table}

For each domain, we generated problems using the problem generator provided in the IPC learning tracks 
and solved them using their true action models with the MADAGASCAR planner~\cite{Rintanen2014MadagascarS} to obtain example trajectories.
These trajectories were broken to action triplets and given to the SAM Learning algorithm one at a time to obtain a safe action model. We halted this process when the learned action model was equivalent to the real model, and report the number of triplets and trajectories given to the algorithm. 
As a baseline, we performed this experiment also with FAMA \cite{aineto19}, which is a modern algorithm for learning action models from trajectories. Note that unlike SAM Learning, FAMA has no safety guarantee. 
In addition, SAM learning runs in time linear in the number of lifted actions, lifted literals, and trajectories, while FAMA runs an automated planner which has an exponential worst-case running time (as planning is PSPACE-complete). SAM learning is only exponential in the maximal number of parameters of each action and literal. Thus, SAM learning can easily scale to very large domains.

Table~\ref{tab:all_experiments} lists the results of our experiments. The ``\# Objects'' column lists the objects in the problem, and the values under ``Trajectories'' and ``Triplets'' are the number of trajectories and action triplets, respectively, required to learn the correct model. 
In all cases, both methods were able to recover the real action model. 
However, SAM Learning was able to find such a model using at most as many, and often significantly fewer triplets and trajectories. 
For example, for the Floortile problem with 2 robots and a $5\times 4$ floor, SAM learned the correct model with only 16 action triplets while FAMA required 52 action triplets. In fact, in all domains except Parking, SAM Learning learned the correct model with a single trajectory.  
Note that once SAM Learning finds a correct model it will never change it,  since SAM only removes literals that are not satisfied in the pre-state from the preconditions and adds literals that switch values between pre and post-states to the effects. Meanwhile, FAMA might add irrelevant literals or remove correct literals from the preconditons or effects as it processes more action triplets. 


The code for SAM learning and our experiments is available at \url{https://github.com/hsle/sam-learning}.


\section{Related Work}

A variety of notions of \emph{safety} have been considered in RL, for example capturing the ability to reliably return to a home state \cite{moldovan2012safe} or avoiding undesirable states  \cite{turchetta2016,wachi2018safe} while learning about the environment. 
But, these approaches to safe exploration require some kind of strong prior knowledge, either in the form of beliefs about the transition model or knowledge that the safety levels 
follow a Gaussian process model. 
Such assumptions are reasonable in the low-level motion planning tasks where RL excels, but they do not suit the kind of discrete, high-level problems typically considered in domain-independent planning.
In addition, in these works safety is soft constraint that an algorithm aims to maximize, while in our case safety is a hard constraint.

Our work is part of the growing literature on learning action models for domain-independent planning~\cite{arora2018review}, which includes algorithms such as ARMS~\cite{yang2007learning}, LOCM~\cite{cresswell2013acquiring}, LOCM2~\cite{cresswell2011generalised}, AMAN~\cite{zhuo2013action}, and FAMA~\cite{aineto19}. 
Similar to SAM learning, ARMS~\cite{yang2007learning} also defines rules to infer an action model from a given set of trajectories. 
Our third rule (``must be an effect'') is somewhat similar to their (I.1) rule. 
The other ARMS rules are different, and are designed to \textbf{explain} the observed trajectories in a succinct manner. Thus, the action model created by ARMS may be either under of over constrained for this purpose. 
LOCM~\cite{cresswell2013acquiring} and LOCM2~\cite{cresswell2013acquiring} 
learn action models by learning and composition state transitions that are consistent with the observed trajectories. They do not require as input a set of possible lifted fluents or types, and learn from data the relation between objects and types, as well as the relation between actions and types. 
LOCM2 is a heuristic version of LOCM algorithm for learning action models. It does not support domains in which the injective action binding assumption does not hold, or domains where a deleted literal does not appear as a precondition. 
AMAN~\cite{zhuo2013action} is an action-model learning algorithm that is specifically designed to handle noisy observations. It constructs a graphical model and learns the statistical relationship between actions and possible state transitions. 
FAMA~\cite{cresswell2011generalised} compiles the problem of finding an action model that is consistent with a set of trajectories to a planning problem. The solution to this planning problem is a sequence of ``actions'' that construct an action model. FAMA is more general than SAM or ESAM in the sense that it supports partial observability. 

FAMA, as well as LOCM, LOCM2, and AMAN, aim to create an action model that \textbf{explains} the given trajectories. This can be viewed as a solving an \emph{inductive logic programming}~\cite{muggleton1994inductive} task. 
The action model they generated is only guaranteed to be \textbf{consistent} with the given set of observations.
Our algorithms (SAM and ESAM) provide a stronger guarantee: the action model they create is \textbf{safe} with respect to the real action model ($\realm$). 
A safe action model (Definition~\ref{def:safe_action_model}) is, by definition, consistent with the given trajectories, 
but a consistent action model (Definition~\ref{def:consistent}) may very well be unsafe. For example, consider an action model $M$ in which the effects of all actions are correct (i.e., match the effects in $\realm$) and all actions have no preconditions. This action model is clearly unsafe, and plans generated with it may be not sound. Yet, such an action model is consistent with any trajectory generated according to the real action model ($\realm$). 
None of the works listed above provide a safety guarantee, and plans generated with the action models they generate may be unsound.\footnote{Note that the soundness and completeness of FAMA (Lemmas 1 and 2 there) do not refer to plans generated by the action model FAMA learns, but to the learning algorithm it self. 
That is, FAMA is sound in the sense that the action model it returns is consistent with the given trajectories, and it is complete in the sense that if a consistent action model exists then FAMA will find it. Indeed, FAMA may return an unsafe action model.}

\section{Relaxing the Assumptions}

Our learning of preconditions is very similar to Valiant's elimination algorithm \cite{Valiant1984} for learning conjunctions in supervised learning. 
Following his work, we can easily support preconditions that are $k$-CNFs (and not just a simple conjunction) 
by considering that all sets of possible clauses of size $k$ as preconditions instead of a simple conjunction. 
This will increase the sample complexity bound in Theorem~\ref{sam-sample-thm} by raising the first term to the $k^{\text{th}}$ power and similarly increase the running time.
Conditional effects can be similarly supported if we can bound the number of literals in their firing condition by some value $k$. 
In this case, the extension to SAM learning keeps track of all possible conditions with at most $k$ literals that hold when an action is applied.


We believe that the algorithm can similarly be extended to handle independent, random noise, provided that either (a) all fluents are corrupted with the same probability or (b) the rate of corruption of each fluent is known. Indeed, this is an example of independent attribute noise, and extensions of Valiant's elimination algorithm to these settings were proposed by Goldman and Sloan~\shortcite{GoldmanS1995} (extending Shackleford and Volper~\citeyear{ShackelfordV1988})
and Decatur and Gennaro~\shortcite{DecaturG1995}, respectively. In the presence of such noise, however, the safety property must be weakened: indeed, since any combination of fluent settings may be observed, albeit with exponentially small probability, we can only expect to guarantee that the action model will be safe with high probability w.r.t.\ the noise. 
Likewise, we believe similar guarantees are possible in sufficiently benign partial information settings, following Michael~\shortcite{michael2010partialObservability}.

If the environment itself is far from deterministic, then clearly the STRIPS rules we learn would be inappropriate, and a different representation would be necessary. We note that if the environment is stochastic and there is noise of unknown rates that differ across fluents, then it seems to be information-theoretically impossible to learn a safe model (in our sense) even when an adequate set of deterministic rules exists, cf.\ the counterexample of Goldman and Sloan~\shortcite{GoldmanS1995}: we cannot distinguish between a fluent that is just corrupted by observation noise from a fluent that is merely correlated with it.

\section{Conclusion and Future Work}

In this work, we presented the Safe Action Model Learning algorithm for lifted domains.
SAM Learning for lifted domains is guaranteed to return an action model that produces sound plans, 
even without knowing the preconditions and effect of the actions in the domain.   
A theoretical analysis shows that the number of trajectories needed to learn an action model that will solve a given problem with high probability is linear in the potential size of the action model. 
This approach is suitable for most domains in current planning benchmarks, 
where the effects of actions are trivial unless the action parameters are bound to different objects. 
We also discussed how to adapt our algorithm to the case where this assumption does not hold. 
In the future, we aim to 
extend safe action-model learning to domains with partial observability and stochasticity. 

\section*{Acknowledgements}
This research is partially funded by NSF awards IIS-1908287, IIS-1939677, and CCF-1718380,
and BSF grant \#2018684 to Roni Stern.

\bibliographystyle{kr}
\bibliography{library}
\end{document}